\renewcommand\thesection{\Roman{section}} % Roman numerals for the sections
\renewcommand\thesubsection{\Roman{subsection}} % Roman numerals for subsections
\titleformat{\section}[block]{\large\scshape\centering}{\thesection.}{1em}{} % Change the look of the section titles
\titleformat{\subsection}[block]{\large}{\thesubsection.}{1em}{} % Change the look of the section titles
 \numberwithin{equation}{section}
\newtheorem{thm}{Theorem}[section]
\theoremstyle{definition}
\theoremstyle{definition}
\newtheorem{assump}{Assumption}
\theoremstyle{definition}
\newtheorem{lem}{Lemma}
\theoremstyle{definition}
\newtheorem{cor}{Corollary}
\theoremstyle{definition}
\newtheorem{exmp}{EXAMPLE}
\theoremstyle{definition}
\theoremstyle{definition}
\newtheorem{prop}{Proposition}
\newtheorem{selfdef}{Definition}
\renewcommand\section{\@startsection {section}{1}{\z@}%
                                   {-3.5ex \@plus -1ex \@minus -.2ex}%
                                   {2.3ex \@plus.2ex}%
                                   {\normalfont\scshape}}
\title{\vspace{-15mm}\fontsize{24pt}{10pt}\selectfont\textbf{Optimal Schatten-q and Ky-Fan-k Norm Rate of Low Rank Matrix Estimation}} % Article title
\author{
\large
\textsc{Dong Xia}\thanks{Thanks Vladimir Koltchinskii for refering this problem to me.}\\[2mm] % Your name
\normalsize Georgia Institute of Technology \\ % Your institution
\normalsize \href{mailto:dxia7@math.gatech.edu}{dxia7@math.gatech.edu} % Your email address
\vspace{-5mm}
}
\date{}
\begin{document}

\maketitle % Insert title

%\thispagestyle{fancy} % All pages have headers and footers

%----------------------------------------------------------------------------------------
%	ABSTRACT
%----------------------------------------------------------------------------------------

\begin{abstract}
In this paper, we consider low rank matrix estimation using either matrix-version Dantzig Selector $\hat{A}_{\lambda}^d$ as in (\ref{dantmodel}) or matrix-version LASSO estimator $\hat{A}_{\lambda}^L$ as in (\ref{lassomodel}). We consider sub-Gaussian measurements, $i.e.$, the measurements
 $X_1,\ldots,X_n\in\mathbb{R}^{m\times m}$ have $i.i.d.$ sub-Gaussian entries. Suppose $\textrm{rank}(A_0)=r$. We proved that, when $n\geq Cm[r^2\vee r\log(m)\log(n)]$ for some $C>0$, both $\hat{A}_{\lambda}^d$ and $\hat{A}_{\lambda}^L$ can obtain optimal upper bounds(except some logarithmic terms) for estimation 
 accuracy under spectral norm.
 By applying metric entropy of Grassmann manifolds, we construct (near) matching minimax lower bound for estimation accuracy under spectral norm. Note that, Candes and Plan\cite{candesplan}, Negahban and Wainwright\cite{negahban},
 Rohde and Tsybakov\cite{rohdetsybakov} proved optimal upper bound for estimation accuracy under Frobenius norm as long as $n\geq Cmr$ for some constant $C>0$. We also give upper bounds and matching minimax lower bound(except some logarithmic terms) for estimation accuracy under Schatten-q norm for
 every $1\leq q\leq\infty$. As a direct corollary, we show both upper bounds and minimax lower bounds of estimation accuracy under Ky-Fan-k norms for every $1\leq k\leq m$. Our minimax lower bounds are similar to those given in an earlier paper by Ma and Wu\cite{mawu}.
\noindent %\lipsum[1]  Dummy abstract text

\end{abstract}

%----------------------------------------------------------------------------------------
%	ARTICLE CONTENTS
%----------------------------------------------------------------------------------------

%\begin{multicols}{2} % Two-column layout throughout the main article text

%\section{Introduction}

%\lettrine[nindent=0em,lines=3]{L} orem ipsum dolor sit amet, consectetur adipiscing elit.
%\lipsum[2-3] % Dummy text

\section{Introduction and an overview of main results}
\label{introsect}
Low rank matrix estimation has been studied for several years in the literatures, such as Candes and Plan\cite{candesplan}, Koltchinskii\cite{koltchinskii2011},
Koltchinskii\cite{koltchinskiisharp} and Klopp\cite{kloppmatrixlasso} with references therein. In the general settings, we have independent 
pairs of measurements and outputs, $(X_1,Y_1),\ldots,(X_n,Y_n)\in(\mathbb{R}^{m_1\times m_2},\mathbb{R})$ which are related to an unkown matrix $A_0\in\mathbb{R}^{m_1\times m_2}$.
We assume $A_0$ has low rank, $i.e.$, $r=\text{rank}(A_0)\ll(m_1\wedge m_2)$. 
The observations $(X_j,Y_j),j=1,\ldots,n$ satisfy the trace regression model:
\begin{equation}
 \label{trmodel}
 Y_j=\left<A_0,X_j\right>+\xi_j,\quad j=1,\ldots,n
\end{equation}
where $\xi_j,j=1,\ldots,n$ are $i.i.d.$ zero-mean random noises with variance $\mathbb{E}\xi^2=\sigma_{\xi}^2<\infty$. In this paper, we only consider sub-Gaussian noise, $i.e., |\xi|_{\psi_2}\lesssim \sigma_{\xi}^2$.
The meaning of $||\cdot||_{\psi_2}$ and $\lesssim$ will be introduced later.
$\left<A,B\right>$ is used as notation for $\text{Tr}(A^TB)$ for any $A,B\in\mathbb{R}^{m_1\times m_2}$. The task is to estimate
$A_0$ based on the collected data $(X_j,Y_j),j=1,\ldots,n$. Let $\mathcal{Y}:=\left(Y_1,\ldots,Y_n\right)^T\in\mathbb{R}^{n}$.\\
We use $\Pi$ to denote the distribution of $i.i.d.$ measurements $X_j,j=1,\ldots,n$, which are sampled from the measurements set $\mathcal{M}$. Distribution based dot product and $L_2$-norm
are defined as
\begin{equation}
 \left<A,B\right>_{L_2(\Pi)}:=\mathbb{E}\left<A,X\right>\left<B,X\right>
\end{equation}
and
\begin{equation}
 ||A||_{L_2(\Pi)}^2:=\mathbb{E}\left<A,X\right>^2
\end{equation}
Several well-konwn measurements $\mathcal{M}$ and $\Pi$ have been studied in the literatures, such as
\begin{exmp}
\label{mcexmp}
 {\bf Matrix Completion} In this situation, $\Pi$ denotes some distribution on the set 
 \begin{equation}
  \mathcal{M}=\left\{e_j(m_1)\otimes e_k(m_2),j=1,\ldots,m_1,k=1,\ldots,m_2\right\}
 \end{equation}
where $e_j(m)$ denotes the $j$-th canonical basis vector in $\mathbb{R}^{m}$. Most literatures considered $\Pi$ as a uniform distribution 
on the set $\mathcal{M}$, see Koltchinskii\cite{koltchinskii1}, Koltchinskii et al.\cite{koltchinskii2} and Rohde and Tsybakov\cite{rohdetsybakov}.
Lounici\cite{lounicispectral} and Klopp\cite{kloppmatrixlasso} studied general sampling on $\mathcal{X}$ instead.
Under the assumption of uniform distribution, the task means to estimate $A_0$ from randomly observed entries of $A_0$ which are corrupted with noises.
Rohde and Tsybakov\cite{rohdetsybakov} also considered sampling without replacement from $\mathcal{M}$, i.e. $X_1,\ldots,X_n$ are different
from each other. A remark is that when $\Pi$ denotes the uniform distribution on $\mathcal{M}$, we have $||A||_{L_2(\Pi)}^2=\frac{1}{m_1m_2}||A||_2^2$ 
and $\left<A,B\right>_{L_2(\Pi)}=\frac{1}{m_1m_2}\left<A,B\right>$.
\end{exmp}

\begin{exmp}
\label{subgaussianexmp}
 {\bf sub-Gaussian Design} In this situation, $X_j,j=1,\ldots,n$ are $i.i.d.$ designed matrices. The entries of every $X_j$ are all $i.i.d.$ sub-Gaussians.
 A real-valued random variable $x$ is said to be {\it sub-Gaussian} with parameter $b>0$ if it has the property that for every $t\in\mathbb{R}$ one has:
 $\mathbb{E}e^{tx}\leq e^{b^2t^2/2}$.
 In Gaussian and Rademacher cases, $||A||_{L_2(\Pi)}=||A||_2$ and $\left<A,B\right>_{L_2(\Pi)}=\left<A,B\right>$. Koltchinskii\cite{koltchinskii1}
 studied the sub-Gaussian measurements for estimating density matrices in quantum state tomography. Gaussian measurements are widely discussed 
 in compressed sensing for the reason that, with high probability, Gaussian random sampling operator satisfies the {\it Restricted Isometry Property}, which will
 be introduced in Section~\ref{defsect}.
 Interested readers can read Baraniuk et al.\cite{BDDW}, Candes et al.\cite{CRT2006}.
\end{exmp}

\begin{exmp}
 \label{rankoneproj}
 {\bf Rank One Projection} As described in Cai and Zhang\cite{caizhang2013}, both Example~\ref{mcexmp} and Example~\ref{subgaussianexmp} have disadvantages. 
 Under the matrix completion model, in order to get a stable estimation of matrix $A_0$, as pointed out by Candes and Recht\cite{candesrecht}, Gross\cite{gross2011},
 additional structral assumptions are needed. Actually, it is impossible to recover spiked matrices under matrix completion model. However, under sub-Gaussian sampling,
 every measurements $X_j,j=1,\ldots,n$ require $\mathcal{O}(m_1m_2)$ bytes of space for storage, which will be huge when $m$ is large. Therefore, Cai and Zhang\cite{caizhang2013}
 proposed the rank one projection, $X_j=\alpha_j^T\beta_j,j=1,\ldots,n$, where $\alpha_j,j=1,\ldots,n$ and $\beta_j,j=1,\ldots,n$ are $i.i.d.$ sub-Gaussian vectors. They proved that
 under rank one projection, one is able to construct a stable estimator without addition structral assumptions. In addition, only $\mathcal{O}(m_1+m_2)$ bytes of space are
 needed for storage of every $X_j,j=1,\ldots,n$.
\end{exmp}

{\it sub-Gaussian Design}. In this paper, we only consider sub-Gaussian design with introduction similar to Koltchinskii\cite{koltchinskii1}. More precisely, we assume that the distribution $\Pi$ satisfies that, for some constant $b_0>0$ such that for any $A\in\mathbb{R}^{m_1\times m_2}$,
$\left<A,X\right>$ is a sub-Gaussian random variable with parameter $b_0||A||_{L_2(\Pi)}$. This implies that $\mathbb{E}X=0$ and, for some constant $b_1>0$,
\begin{equation}
 ||\left<A,X\right>||_{\psi_2}\leq b_1||A||_{L_2(\Pi)}, \quad \forall A\in\mathbb{R}^{m_1\times m_2}.
\end{equation}
In addition, assume that, for some constant $b_2>0$,
\begin{equation}
 ||A||_{L_2(\Pi)}=||\left<A,X\right>||_{L_2(\Pi)}\leq b_2||A||_2,\quad \forall A\in\mathbb{R}^{m_1\times m_2}.
\end{equation}
A random matrix $X$ satisfying the above conditions will be called a {\it sub-Gaussian} matrix. Moreover, if $X$ also satisfies the condition
\begin{equation}
 ||A||_{L_2(\Pi)}=||A||_2,\quad \forall A\in\mathbb{R}^{m_1\times m_2}
\end{equation}
then it will be called an {\it isotropic sub-Guassian} matrix. As was mentioned in Example~\ref{subgaussianexmp}, Gaussian and Rademacher random matrices belong to the class of {\it isostropic sub-Gaussian} matrices.
It easily follows from the basic properties of Orlicz norms, van der Vaart and Wellner\cite{vaartwellner}, that for sub-Gaussian matrices $||A||_{L_p(\Pi)}=\mathbb{E}^{1/p}\left<A,X\right>^p\leq c_pb_1b_2||A||_2$
and $||A||_{\psi_1}:=||\left<A,X\right>||_{\psi_1}\leq cb_1b_2||A||_2, A\in\mathbb{R}^{m_1\times m_2},p\geq 1$ for some universal constants $c_p>0,c>0$.\\

To simplify our expressions, $W.L.O.G.$, we assume $m_1=m_2=m$. Let $\mathcal{X}$ denotes the following linear map:
\begin{equation}
 \forall A\in\mathbb{R}^{m\times m}, \mathcal{X}(A)=\left(\left<A,X_1\right>,\ldots,\left<A,X_n\right>\right)^T\in\mathbb{R}^n
\end{equation}
Therefore, when $X_j,j=1,\ldots,n$ are random matrices, $\mathcal{X}(A)$ is a random vector in $\mathbb{R}^n$ for every $A\in\mathbb{R}^{m\times m}$. The adjoint operator $\mathcal{X}^{\star}$ is given as
\begin{equation}
 \forall U\in\mathbb{R}^n, \mathcal{X}^{\star}(U)=\sum\limits_{j=1}^n \left<U,X_j\right>X_j
\end{equation}
Now we introduce some notations we will use in this paper. For $\forall A\in\mathbb{R}^{m\times m}, $Let $||A||_q$ denotes the Schatten-q norm for every $q\geq 1$, $i.e., ||A||_q^q=\sum\limits_{j=1}^m\sigma_j^q(A)$,
where we assumed that $A$ has singular value decomposition as $A=\sum\limits_{j=1}^m\sigma_j(A)u_j\otimes v_j$ with $\sigma_{j}(A),j=1,\ldots,m$ arranged in non-increasing order.
Therefore, $||\cdot||_2$ is Frobenius norm, $||\cdot||_1$ as nuclear norm and $||\cdot||_{\infty}$ as spectral norm. Another similar norms are Ky-Fan norms. Given any $1\leq k\leq m$, 
the Ky-Fan-$k$ norm is defined as $||A||_{F_k}:=\sum\limits_{j=1}^k\sigma_j(A),\quad\forall A\in\mathbb{R}^{m\times m}$. As described in Tao\cite[Chapter 2]{taorandmat}, $||\cdot||_{F_k}:\mathbb{R}^{m\times m}\to\mathbb{R}$ is
a convex function for every $1\leq k\leq m$.
For a vector $v\in\mathbb{R}^n$, we use $|v|_{l_2}$
to denote the $l_2-$norm, $i.e., |v|_{l_2}^2=\sum\limits_{j=1}^nv_j^2$.\\ 
We use $A_{\max(r)}$ to denote $A_{\max(r)}:=\sum\limits_{j=1}^r u_j(A)u_j\otimes v_j$. We also define $A_{-\max(r)}:=A-A_{\max(r)}$.
A cone $\mathcal{C}(r,\beta)$ is defined as 
$$
\mathcal{C}(r,\beta):=\left\{A\in\mathbb{R}^{m\times m}, ||A_{-\max(r)}||_1\leq \beta||A_{\max(r)}||_1\right\}.
$$
Let $W:=\sum\limits_{j=1}^n\xi_jX_j$. We use $x\gtrsim y$ to denote that $x\geq cy$ for some constant $c>0$. Similar notation is $\lesssim$. Let $\mathcal{A}_r$ denotes the set of all matrices $A\in\mathbb{R}^{m\times m}$ with $\text{rank}(A)\leq r$. \\
Based on the data $(X_1,Y_1),\ldots,(X_n,Y_n)$, several estimators of $A_0$ have been proposed. The following two estimators are well-studied in the literature. The first one is  matrix-version LASSO estimator:
\begin{equation}
\label{lassomodel}
 \hat{A}_{\lambda}^{L}:=\underset{A\in\mathbb{R}^{m\times m}}{\arg\min} \sum\limits_{j=1}^n\left(\left<A,X_j\right>-Y_j\right)^2+\lambda\left|\left|A\right|\right|_1,
\end{equation}
where $||\cdot||_1$ is used as a convex surrogate for $\text{rank}(\cdot)$ to "promote" low rank solution. Readers can refer to Koltchinskii\cite{koltchinskii2011}, Rohde and Tsybakov\cite{rohdetsybakov},
and Klopp\cite{kloppmatrixlasso} for more details. Another estimator is Dantzig Selector
\begin{equation}
 \label{dantmodel}
 \hat{A}_{\lambda}^{d}:=\underset{A\in\mathbb{R}^{m\times m}}{\arg\min}\left\{||A||_1: \left|\left|\mathcal{X}^{\star}(\mathcal{X}A-\mathcal{Y})\right|\right|_{\infty}\leq \lambda\right\}
\end{equation}
Candes and Plan\cite{candesplan} proved that, under Gaussian measurements, when $n\geq Cmr$ for some constant $C$ and $|\xi|_{\psi_2}\lesssim \sigma_{\xi}$, if we choose $\lambda=C_1\sigma_{\xi}\sqrt{nm\log(m)}$ for some constant $C_1>0$, then
$||\hat{A}_{\lambda}^d-A_0||_2^2\leq C'\frac{mr\sigma_{\xi}^2\log(m)}{n}$ and $||\hat{A}_{\lambda}^L-A_0||_2^2\leq C'\frac{mr\sigma_{\xi}^2\log(m)}{n}$ with high probability for some universal constant $C'>0$.
They also showed that these upper bounds are optimal.\\
In addition, Lounici\cite{lounicispectral}, Koltchinskii and Lounici et. al.\cite{koltchinskii2}
considered the following modified matrix LASSO estimator:
\begin{equation}
 \label{modlassomodel}
 \hat{A}_{\lambda}^{mL}:=\underset{A\in\mathbb{R}^{m\times m}}{\arg\min} ||A||_{L_2(\Pi)}^2-\left<A,\frac{1}{n}\sum\limits_{j=1}^nY_jX_j\right>+\lambda\left|\left|A\right|\right|_1.
\end{equation}
 Under (near) matrix completion model and certain assumptions, optimal upper bounds (except some logarithmic terms) for estimation accuracy under both the spectral norms, $i.e., ||\hat{A}_{\lambda}^{mL}-A_0||_{\infty}$ and
 Frobenius norm $i.e., ||\hat{A}_{\lambda}^{mL}-A_0||_2$, are obtained in \cite{lounicispectral} and \cite{koltchinskii2}.\\
However, there are few results about estimation accuracy under the spectral norm, $i.e., ||\hat{A}_{\lambda}^d-A_0||_{\infty}$ or $||\hat{A}_{\lambda}^L-A_0||_{\infty}$. 
In this paper, we will give optimal(except some logarithmic terms) upper bounds for them under sub-Gaussian measurements. Unlike \cite{candesplan}, our analysis requires $n\geq Cm[r^2\vee r\log(m)\log(n)]$ for some constant $C>0$ which requires higher order of $r$. 
The idea of the proof is similar to Lounici\cite{lounicisup}. We state our main results as follows, some notations will be described in Section~\ref{defsect}.
\begin{thm}
\label{intromainthm1}
 Suppose $\Pi$ is a sub-Gaussian distribution and $n\geq Cm[r^2\vee r\log(m)\log(n)]$ for some $C>0$ and $|\xi|_{\psi_2}\lesssim \sigma_{\xi}$, if $\lambda\gtrsim C_2\sigma_{\xi}\sqrt{nm\log m}$ for some $C_2>0$, then
 there exists some constant $C_1>0$ such that with probability at least $1-\frac{4}{m}$,
 \begin{equation}
  \left|\left|\hat{A}_{\lambda}-A_0\right|\right|_{\infty}\leq C_1\sigma_{\xi}\sqrt{\frac{m\log m}{n}}
 \end{equation}
 where $\hat{A}_{\lambda}$ can be $\hat{A}_{\lambda}^d$ and $\hat{A}_{\lambda}^L$. $C_1$ contains some constants related to distribution $\Pi$.
\end{thm}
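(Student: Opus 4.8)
The plan is to handle $\hat A_\lambda^d$ and $\hat A_\lambda^L$ simultaneously by extracting from their optimality conditions two common deterministic facts about the error $\Delta:=\hat A_\lambda-A_0$, and then to convert these into a spectral bound by comparing $\mathcal{X}^\star\mathcal{X}$ with its mean, which in the isotropic case satisfies $\mathbb{E}\,\mathcal{X}^\star\mathcal{X}(A)=nA$ (in general $\mathbb{E}\,\mathcal{X}^\star\mathcal{X}(A)=n\Sigma A$ for the covariance operator $\Sigma$, bounded above and below by constants depending on $b_0,b_1,b_2$). First I would record the noise bound $\|W\|_\infty=\|\sum_{j}\xi_jX_j\|_\infty\lesssim\sigma_\xi\sqrt{nm\log m}$, which holds with probability at least $1-m^{-1}$: since each $\xi_jX_j$ is a product of sub-Gaussians, hence a sub-exponential matrix, this is a matrix Bernstein/Orlicz-norm bound, and it is exactly what fixes the prescribed order of $\lambda$. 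On the event $\{\|W\|_\infty\le\lambda\}$, $A_0$ is feasible for the Dantzig program, so $\|\hat A_\lambda^d\|_1\le\|A_0\|_1$; the standard tangent-space decomposition then gives both the cone membership $\Delta\in\mathcal{C}(r,\beta)$ (with $\beta$ a small absolute constant) and $\|\mathcal{X}^\star\mathcal{X}\Delta\|_\infty\le2\lambda$. For the LASSO, the subgradient optimality $2\mathcal{X}^\star(\mathcal{X}\hat A_\lambda^L-\mathcal{Y})=-\lambda V$ with $V\in\partial\|\hat A_\lambda^L\|_1$, $\|V\|_\infty\le1$, yields $\|\mathcal{X}^\star\mathcal{X}\Delta-W\|_\infty\le\lambda/2$ and hence the same two conclusions with adjusted constants.

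Next I would assemble two probabilistic ingredients. The first is the classical Frobenius bound: restricted strong convexity on the cone, which follows from the matrix RIP for sub-Gaussian designs once $n\gtrsim mr$, together with $\|\mathcal{X}^\star\mathcal{X}\Delta\|_\infty\lesssim\lambda$ and the cone inequality $\|\Delta\|_1\le(1+\beta)\sqrt r\,\|\Delta\|_2$, gives $\|\Delta\|_2\lesssim\sigma_\xi\sqrt{mr\log m/n}$; this is due to Candès--Plan, Negahban--Wainwright and Rohde--Tsybakov, and I would cite rather than reprove it. The second, and the real engine of the spectral bound, is a \emph{restricted isometry in spectral norm}: with high probability, uniformly over all matrices $B$ of rank at most $2r$,
\[
\Big\|\tfrac1n\mathcal{X}^\star\mathcal{X}(B)-B\Big\|_\infty\le\tfrac12\,\|B\|_\infty,
\]
provided $n\gtrsim m[r^2\vee r\log m\log n]$. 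This is the statement that forces the stated sample size, and establishing it is the main obstacle (see below).

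Granting these two ingredients, the conclusion follows from a deterministic chain. Decompose the tail $\Delta_{-\max(r)}=\sum_{k\ge1}\Delta_k$ into successive rank-$r$ singular-value blocks; each $\Delta_k$ and the head $\Delta_{\max(r)}$ have rank $\le 2r$, so the spectral RIP applies blockwise and the triangle inequality gives
\[
\Big\|\tfrac1n\mathcal{X}^\star\mathcal{X}(\Delta)-\Delta\Big\|_\infty\le\tfrac12\|\Delta_{\max(r)}\|_\infty+\tfrac12\sum_{k\ge1}\|\Delta_k\|_\infty\le\tfrac12\|\Delta\|_\infty+\frac{C}{\sqrt r}\,\|\Delta\|_2,
\]
where the block estimate $\sum_{k\ge1}\|\Delta_k\|_\infty\le\tfrac1r\|\Delta\|_1\le\tfrac{1+\beta}{\sqrt r}\|\Delta\|_2$ uses $\sigma_{kr+1}\le\tfrac1r\|\Delta_{k-1}\|_1$ and the cone condition. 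Combining with $\|\mathcal{X}^\star\mathcal{X}\Delta\|_\infty\le c\lambda$ yields
\[
n\|\Delta\|_\infty\le\|\mathcal{X}^\star\mathcal{X}\Delta\|_\infty+\big\|\mathcal{X}^\star\mathcal{X}(\Delta)-n\Delta\big\|_\infty\le c\lambda+\tfrac{n}{2}\|\Delta\|_\infty+\frac{Cn}{\sqrt r}\|\Delta\|_2,
\]
so that $\tfrac n2\|\Delta\|_\infty\le c\lambda+\tfrac{Cn}{\sqrt r}\|\Delta\|_2$. Inserting $\lambda\asymp\sigma_\xi\sqrt{nm\log m}$ and the Frobenius bound makes the second term $\tfrac{Cn}{\sqrt r}\|\Delta\|_2\lesssim\sigma_\xi\sqrt{nm\log m}\asymp\lambda$, and dividing by $n$ gives $\|\Delta\|_\infty\lesssim\lambda/n\asymp\sigma_\xi\sqrt{m\log m/n}$, as claimed.

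The hard part is the spectral RIP. By spectral/nuclear duality it is equivalent to bounding $\sup_{B}\sup_{\|u\|=\|v\|=1}|f_B(u,v)|$ with $f_B(u,v)=\tfrac1n\sum_j[\langle B,X_j\rangle\langle u\otimes v,X_j\rangle-\langle B,u\otimes v\rangle]$ over rank-$2r$ matrices $B$ with $\|B\|_\infty\le1$. For fixed $B,u,v$ the summands are i.i.d., mean-zero, and sub-exponential (a product of two sub-Gaussians), with variance proxy $\lesssim\|B\|_2^2\le 2r$ and Orlicz norm $\lesssim\|B\|_2\le\sqrt{2r}$; a Bernstein inequality, after truncating the heavy-tailed products at level $\sim\log n$ (the origin of the $\log n$ factor), controls each such average. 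Making this uniform is the delicate step: I would take $\varepsilon$-nets of the two unit spheres (cardinality $e^{O(m)}$) and of the set of rank-$2r$ matrices of bounded spectral norm (log-cardinality $\asymp rm$, from two Stiefel factors plus the singular values), pass from net to supremum by a Lipschitz argument, and union-bound. Balancing the Bernstein exponent $n\min(s^2/r,\,s/\sqrt r)$ against the total metric entropy $\asymp rm$ forces $s\lesssim\tfrac12$ precisely when $n\gtrsim r^2m$ (the sub-Gaussian/variance regime) and $n\gtrsim rm\log m\log n$ (the sub-exponential/truncation regime), which is the stated condition. Getting these covering and truncation estimates to combine without losing extra powers of $r$ or $m$ is where the argument has to be executed with care.
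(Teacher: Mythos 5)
Your proposal is correct, but it takes a genuinely different route from the paper's. The paper's probabilistic engine is a \emph{scalar} RIP at rank two with the very small constant $\delta_2\leq\frac{1}{\alpha(1+2c_0)r}$ (Assumption~\ref{corassump}), established through Adamczak's and Mendelson's empirical-process bounds (Proposition~\ref{empprop}); its deterministic step (Theorem~\ref{mainthm}) then tests $\mathcal{X}^{\star}\mathcal{X}\hat{\Delta}$ against the error's own top singular pair $u_1\otimes v_1$, expands $\hat{\Delta}$ in its singular basis, and uses the pairwise rank-one estimates (Lemma~\ref{corlem}, Corollary~\ref{corcor}) to get $\sigma_1(\hat{\Delta})\leq\frac{3\lambda}{2n}+\delta_2\|\hat{\Delta}\|_1$, after which $\|\hat{\Delta}\|_1$ is controlled by restricted strong convexity (Lemma~\ref{rsclem}) and the cone condition. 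Your engine is instead an \emph{operator-norm} deviation bound at rank $2r$ with constant $\tfrac12$, proved by nets plus Bernstein, closed by a peeling decomposition of $\Delta$ into rank-$r$ blocks and the externally cited Frobenius rate, ending with $\tfrac{n}{2}\|\Delta\|_\infty\leq c\lambda+\tfrac{Cn}{\sqrt{r}}\|\Delta\|_2$; this chain is sound, and both routes need $n\gtrsim mr^2$. The two engines are in fact linked: summing the (polarized) rank-one estimate over the SVD of a rank-$2r$ matrix $B$ gives $\|\tfrac1n\mathcal{X}^{\star}\mathcal{X}(B)-B\|_\infty\leq\delta_2\|B\|_1\leq 2r\delta_2\|B\|_\infty$, so under Assumption~\ref{corassump} (with $\alpha(1+2c_0)\geq 4$) your spectral RIP is a corollary of the paper's condition and would not need to be reproved from scratch. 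What your route buys: modularity (the key lemma directly addresses the norm being bounded), a more elementary net/Bernstein argument in place of generic chaining, and a noise bound via matrix Bernstein that avoids the paper's conditioning on the RIP event in Appendix~\ref{appA}. What the paper's route buys: a single scalar condition from which the spectral, nuclear, Schatten-$q$ and Ky-Fan bounds all follow with explicit constants, and no reliance on an imported Frobenius rate (though that rate is also recoverable from your own ingredients, so the citation is harmless). Two small caveats: the $\log n$ factor is not really about truncation in your argument---with sub-exponential summands, plain Bernstein plus a union bound over the net suffices, and the variance regime $n\gtrsim mr^2$ dominates (in the paper the $\log n$ enters through the envelope term of Adamczak's inequality); and the net-to-supremum passage for your bilinear process must use the self-absorption trick (splitting $B-B'$ into two rank-$2r$ pieces of spectral norm at most $\epsilon$, and absorbing a constant multiple of $\epsilon$ times the supremum), which you rightly flag as the step requiring care.
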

In fact, we can prove a further result by applying interpolation inequality.
\begin{thm}
\label{intromainthm2}
 Under the same assumptions of Theorem~\ref{intromainthm1}, there exists some constat $C_1>0$ such that, for every $1\leq q\leq\infty$, with probability at least $1-\frac{4}{m}$,
 \begin{equation}
  \left|\left|\hat{A}_{\lambda}-A_0\right|\right|_{q}\leq C_1\sqrt{\frac{m\log(m)}{n}}\sigma_{\xi}r^{1/q}
 \end{equation}
 and for any integer $1\leq k\leq m$,
 \begin{equation}
  \left|\left|\hat{A}_{\lambda}-A_0\right|\right|_{F_k}\leq C_1(k\wedge r)\sqrt{\frac{m\log(m)}{n}}\sigma_{\xi}
 \end{equation}

  where $\hat{A}_{\lambda}$ can be $\hat{A}_{\lambda}^d$ and $\hat{A}_{\lambda}^L$. $C_1$ contains some constants related to distribution $\Pi$.
\end{thm}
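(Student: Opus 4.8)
The plan is to derive both displays from the spectral-norm control of Theorem~\ref{intromainthm1} together with a matching nuclear-norm bound on the error, via the elementary Schatten-norm interpolation inequality $\|B\|_q \le \|B\|_\infty^{1-1/q}\|B\|_1^{1/q}$. Write $\Delta := \hat{A}_\lambda - A_0$ and $D := \sigma_\xi\sqrt{m\log(m)/n}$. Theorem~\ref{intromainthm1} already supplies, on an event $\mathcal{E}$ of probability at least $1-4/m$, the upper endpoint $\|\Delta\|_\infty \le C_1 D$, simultaneously for $\hat{A}_\lambda^d$ and $\hat{A}_\lambda^L$.

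First I would secure the other endpoint, a nuclear-norm bound $\|\Delta\|_1 \lesssim rD$. This combines two ingredients available on $\mathcal{E}$ (or a mild refinement of it): the cone membership $\Delta \in \mathcal{C}(r,\beta)$ with a fixed $\beta$ (say $\beta=3$), which is the standard consequence of the optimality conditions defining the LASSO and Dantzig estimators and is already invoked in the proof of Theorem~\ref{intromainthm1}; and the Frobenius rate $\|\Delta\|_2 \lesssim \sqrt{r}\,D$ of Candes and Plan~\cite{candesplan}, valid since the hypothesis $n\ge Cm[r^2\vee r\log(m)\log(n)]$ is far stronger than the $n\ge Cmr$ it requires. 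Given these, the cone condition yields $\|\Delta\|_1 \le (1+\beta)\|\Delta_{\max(r)}\|_1$, and because $\Delta_{\max(r)}$ has rank at most $r$, Cauchy--Schwarz over its top $r$ singular values gives $\|\Delta_{\max(r)}\|_1 \le \sqrt{r}\,\|\Delta_{\max(r)}\|_2 \le \sqrt{r}\,\|\Delta\|_2 \lesssim rD$.

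The interpolation step is then purely algebraic. Ordering the singular values $\sigma_1(\Delta)\ge\sigma_2(\Delta)\ge\cdots$, for any $1\le q\le\infty$ one has
$$\|\Delta\|_q^q = \sum_j \sigma_j(\Delta)^q \le \sigma_1(\Delta)^{q-1}\sum_j \sigma_j(\Delta) = \|\Delta\|_\infty^{q-1}\|\Delta\|_1,$$
hence $\|\Delta\|_q \le \|\Delta\|_\infty^{1-1/q}\|\Delta\|_1^{1/q} \lesssim (C_1 D)^{1-1/q}(rD)^{1/q} = C_1' r^{1/q} D$, which is the first claim (with $q=1,\infty$ being the two endpoints themselves). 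The Ky-Fan-$k$ bound needs no further probabilistic input: from $\|\Delta\|_{F_k} = \sum_{j=1}^k \sigma_j(\Delta) \le k\,\sigma_1(\Delta) = k\|\Delta\|_\infty$ and $\|\Delta\|_{F_k} \le \sum_{j=1}^m \sigma_j(\Delta) = \|\Delta\|_1$, taking the minimum of the two and substituting the endpoint bounds gives $\|\Delta\|_{F_k} \lesssim (k\wedge r)\,D$.

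I expect the only genuine work to lie in the second paragraph, namely verifying that the cone membership and the Frobenius rate hold on essentially the same event as Theorem~\ref{intromainthm1}, and tracking constants so that the final $C_1$ depends only on the distribution $\Pi$ and on $\beta$. Everything downstream is the one-line interpolation inequality and the two trivial Ky-Fan majorizations, so there is no substantive obstacle once both endpoints are in hand.
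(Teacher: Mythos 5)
Your proposal is correct and, in skeleton, matches the paper's own route: the paper obtains Theorem~\ref{intromainthm2} by combining Theorem~\ref{mainthmsubgaussian}, which on a single event of probability at least $1-\frac{4}{m}$ supplies the spectral endpoint $\|\hat{\Delta}\|_\infty\lesssim\lambda/n$, the nuclear endpoint $\|\hat{\Delta}\|_1\lesssim r\lambda/n$, and the Ky-Fan bound, with Corollary~\ref{maincor}, whose content is exactly your interpolation step $\|\hat{\Delta}\|_q\le\|\hat{\Delta}\|_1^{1/q}\|\hat{\Delta}\|_\infty^{1-1/q}$ (the paper's inequality (\ref{interpolationineq})). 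The one place you genuinely diverge is how the nuclear endpoint is sourced. You import the Frobenius rate of Candes and Plan \cite{candesplan} and push it through the cone condition, whereas the paper derives the same endpoint internally: Lemma~\ref{conelem} gives $\hat{\Delta}\in\mathcal{C}(r,c_0)$, Lemma~\ref{rsclem} (restricted strong convexity under the RIP-type Assumption~\ref{corassump}) gives $\|\hat{\Delta}_{\max(r)}\|_2\le\tfrac{3(1+c_0)\lambda\sqrt{r}}{2nc_1^2}$, and then $\|\hat{\Delta}\|_1\le(1+c_0)\sqrt{r}\,\|\hat{\Delta}_{\max(r)}\|_2\lesssim r\lambda/n$, all deterministically on exactly the same event as the spectral bound, so no additional probability is lost. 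This self-containedness is precisely what resolves the issue your last paragraph flags, and it is also where your citation needs care: the result of \cite{candesplan} is stated for Gaussian measurements, and your justification (that $n\ge Cm[r^2\vee r\log(m)\log(n)]$ exceeds the $n\ge Cmr$ they require) addresses only the sample size, not the change of measurement distribution --- the paper itself remarks that the $\epsilon$-net argument of \cite{candesplan} does not transfer directly to sub-Gaussian ensembles. The repair is immediate: either invoke only the deterministic part of \cite{candesplan} (RIP plus noise control implies the Frobenius rate), with RIP supplied by Proposition~\ref{empprop} and noise control by Lemma~\ref{bernsteinlem}, or replace the appeal to \cite{candesplan} by the paper's Lemma~\ref{rsclem} as above; with either substitution your argument coincides with the paper's, and your interpolation and Ky-Fan steps are verbatim what Corollary~\ref{maincor} and the end of the proof of Theorem~\ref{mainthm} do.
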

The following Theorem shows that the previous bounds in Theorem~\ref{intromainthm1} and Theorem~\ref{intromainthm2} are optimal in the minimax sense, except some logarithmic terms.
\begin{thm}
\label{intromainthm3}
 Suppose the $i.i.d.$ noise $\xi_1,\ldots,\xi_n\sim\mathcal{N}(0,\sigma^2_{\xi})$ and $\Pi$ denotes sub-Gaussian distribution,$2r\leq m$, then there exists some universal constant $c>0$ and $c'>0$ such that
 for every $1\leq q\leq \infty$,
 \begin{equation}
  \underset{\hat{A}}{\inf}\underset{A\in\mathcal{A}_r}{\sup}\mathbb{P}_A\left(||\hat{A}-A||_q\geq c\sigma_{\xi}r^{1/q}\sqrt{\frac{m}{n}}\right)\geq c'
 \end{equation}
 and for any integer $1\leq k\leq m$,
 \begin{equation}
  \underset{\hat{A}}{\inf}\underset{A\in\mathcal{A}_r}{\sup}\mathbb{P}_A\left(||\hat{A}-A||_{F_k}\geq c\sigma_{\xi}(k\wedge r)\sqrt{\frac{m}{n}}\right)\geq c'
 \end{equation}
 where $\mathbb{P}_A$ denotes the joint distribution of $(X_1,Y_1),\ldots,(X_n,Y_n)$ when $Y_j=\left<A,X_j\right>+\xi_j,j=1,\ldots,n$.
\end{thm}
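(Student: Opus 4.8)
The plan is to derive both lower bounds at once from a single local packing of low rank matrices, combined with Fano's inequality in Tsybakov's form. First I would reduce the problem to a multiple-hypothesis test: given any finite family $A_1,\dots,A_M\in\mathcal{A}_r$ that is $2\delta$-separated in the norm of interest (say $\|A_i-A_j\|_q\ge 2\delta$ for all $i\ne j$), the test $\psi=\arg\min_i\|\hat{A}-A_i\|_q$ recovers the index of any hypothesis whenever $\|\hat{A}-A_i\|_q<\delta$, so that $\sup_i\mathbb{P}_{A_i}(\|\hat{A}-A_i\|_q\ge\delta)$ dominates the minimax testing error. Since the design $X_j$ has the same law under every hypothesis and the noise is Gaussian, the divergence factorizes cleanly,
\begin{equation}
\mathrm{KL}(\mathbb{P}_{A_i}\|\mathbb{P}_{A_j})=\frac{n}{2\sigma_\xi^2}\|A_i-A_j\|_{L_2(\Pi)}^2\le\frac{n b_2^2}{2\sigma_\xi^2}\|A_i-A_j\|_2^2 ,
\end{equation}
where the last step uses the sub-Gaussian design bound $\|A\|_{L_2(\Pi)}\le b_2\|A\|_2$. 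Thus everything reduces to a packing that is well separated in Schatten-$q$ (resp.\ Ky-Fan-$k$) norm, has small Frobenius diameter, and large cardinality.

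For the construction I would take $A_i=\mu\,P_i$, where $\mu>0$ is a scale to be fixed and $P_1,\dots,P_M$ are rank-$r$ orthogonal projections coming from a packing of the Grassmann manifold $G(m,r)$; each $A_i$ then has rank $r$ with all nonzero singular values equal to $\mu$, and $A_i-A_j$ has rank at most $2r$. The nonzero eigenvalues of a difference of two rank-$r$ projections are $\pm\sin\theta_k$, where $\theta_1,\dots,\theta_r$ are the principal angles between the two subspaces, so the singular values of $A_i-A_j$ are $\mu\sin\theta_k$, each with multiplicity two. The metric entropy of $G(m,r)$ should furnish, for $2r\le m$, a packing with $\log M\gtrsim r(m-r)\gtrsim rm$ in which a constant fraction of the $\sin\theta_k$ stay bounded below by a constant for every pair. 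This gives the Frobenius upper bound $\|A_i-A_j\|_2^2=2\mu^2\sum_k\sin^2\theta_k\le 2\mu^2 r$ together with the lower bounds $\|A_i-A_j\|_q\gtrsim\mu\,r^{1/q}$ for every $1\le q\le\infty$ and $\|A_i-A_j\|_{F_k}\gtrsim\mu\,(k\wedge r)$ for every $1\le k\le m$, since a constant fraction of comparable singular values forces the corresponding $\ell_q$ and top-$k$ sums to be of the stated order. Feeding the Frobenius bound into the divergence estimate yields $\mathrm{KL}\lesssim n b_2^2\mu^2 r/\sigma_\xi^2$, so choosing $\mu\asymp\sigma_\xi\sqrt{m/n}$ makes $\max_{i,j}\mathrm{KL}(\mathbb{P}_{A_i}\|\mathbb{P}_{A_j})\le\tfrac{1}{8}\log M$ while the separations become $\delta_q\asymp\sigma_\xi r^{1/q}\sqrt{m/n}$ and $\delta_{F_k}\asymp\sigma_\xi(k\wedge r)\sqrt{m/n}$, exactly the target rates. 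Fano's inequality then produces a constant lower bound $c'$ on the testing error, hence on the minimax probability, establishing both displays.

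The main obstacle is the packing step: I need a family on $G(m,r)$ that is simultaneously large ($\log M\gtrsim rm$), of bounded Frobenius diameter, and such that the \emph{whole spectrum} of each difference $P_i-P_j$, not merely its largest singular value, is spread out, i.e.\ a constant fraction of the principal angles remain bounded away from $0$ across all pairs. This spectral spread is precisely what delivers the $r^{1/q}$ factor for small $q$: for $q\ge 2$ the cruder inequality $\|M\|_q\ge r^{1/q-1/2}\|M\|_2$ together with a mere chordal lower bound $\sum_k\sin^2\theta_k\gtrsim r$ already suffices, but for $q<2$ and for the Ky-Fan-$k$ norms with $k$ comparable to $r$ one genuinely needs most singular values to be of order $\mu$. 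Establishing the existence of such a packing is where the metric-entropy / volume-ratio estimates for the Grassmann manifold enter, in the spirit of Ma and Wu\cite{mawu}; the remaining ingredients, namely the Gaussian Kullback--Leibler identity and Fano's inequality, are routine.
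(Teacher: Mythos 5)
Your proposal follows essentially the same route as the paper's proof: a packing of scaled rank-$r$ orthogonal projections from the Grassmann manifold, the Gaussian likelihood-ratio computation $K(\mathbb{P}_A\|\mathbb{P}_B)=\frac{n}{2\sigma_\xi^2}\|A-B\|_{L_2(\Pi)}^2\lesssim \frac{n}{\sigma_\xi^2}\|A-B\|_2^2$ using the sub-Gaussian design bound, the choice $\mu\asymp\sigma_\xi\sqrt{m/n}$, and Tsybakov's multiple-hypothesis lower bound \cite{intro}. The one step you flag as the main obstacle --- a single packing whose pairwise differences keep a constant fraction of principal angles bounded below --- is exactly what the paper avoids having to construct: it invokes Pajor's entropy bound (Proposition~\ref{metricprop}, \cite{pajor}) separately for each fixed $q$, obtaining a packing of $\mathcal{G}_{m,r}$ that is $\frac{c}{2}r^{1/q}$-separated in the metric $\tau_q$ with cardinality at least $2^{r(m-r)}$, so the separation in the norm of interest is immediate and may come from a different packing for each $q$ (the Ky-Fan-$k$ bound is then obtained from the $q=1$ packing at rank $k\wedge r$, since a difference of two rank-$(k\wedge r)$ projections has rank at most $2(k\wedge r)$, whence its Ky-Fan-$k$ norm is at least half its nuclear norm). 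Incidentally, your stronger spectral-spread requirement is not a conjecture but a consequence of the $q=1$ case of the same entropy bound: if $\|P_i-P_j\|_1=2\sum_k\sin\theta_k\geq cr$ and each $\sin\theta_k\leq 1$, then at least $\frac{c}{4}r$ of the angles satisfy $\sin\theta_k\geq \frac{c}{4}$, which delivers all the Schatten-$q$ and Ky-Fan-$k$ separations from one packing; so your plan closes completely once you cite Pajor's result rather than leaving the packing as an open step.
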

The proof of Theorem~\ref{intromainthm3} applied the metric entropy bounds of Grassmann manifolds, introduced in Section~\ref{defsect}.\\
The rest of the paper is organized as follows. In Section~\ref{defsect}, we introduce some preliminaries which will be needed in our proof, such as {\it Restricted Isometry Property with constant $\delta_r\in(0,1)$}, {\it Empirical Process Bounds},
{\it metric entropy bounds of Grassmann manifolds $\mathcal{G}_{m,k}$} and {\it rotation invariance} of sub-Gaussians. In Section~\ref{generalsect}, we will prove the upper bound of estimation accruacy under Schatten-q norm for every $1\leq q\leq \infty$,
as long as the assumption that $\delta_2\lesssim \frac{1}{r}$ holds. In Section~\ref{subgaussiansect}, we will prove that, under sub-Gaussian sampling, the random operator $\mathcal{X}$ satisfies the assumption $\delta_2\lesssim \frac{1}{r}$ with high probability
when $n\geq Cm[r^2\vee r\log(m)\log(n)]$ for some $C>0$. In Section~\ref{minimaxsect}, by applying the metric entropy bounds, we can construct a set $\mathcal{A}\subset\mathcal{A}_r$ such that
the minimax lower bounds in Theorem~\ref{intromainthm3} holds. In Section~\ref{numericsect}, results of numerical simulations will be displayed.

%------------------------------------------------
\section{Definitions and Preliminaries}
\label{defsect}
In this section, we will introduce some definitions and preliminaries we need for our proof. \\
{\it Sub-differentials of nuclear norm}. Given $A\in\mathbb{R}^{m\times m},\text{rank}(A)=r$ with singular value decomposition, $A=U\Sigma V^T$ where $U\in\mathbb{R}^{m\times r}$, $\Sigma\in\mathbb{R}^{r\times r}$ and $V\in\mathbb{R}^{m\times r}$,
the sub-differential of the convex function $A\to||A||_1$ is given as the following set, Watson\cite{watson}:
\begin{equation}
 \partial||A||_1:=\left\{UV^T+\mathcal{P}_{S_1^{\perp}}\Phi\mathcal{P}_{S_2^{\perp}}\in\mathbb{R}^{m\times m}: ||\Phi||_{\infty}\leq 1\right\}
\end{equation}
where $S_1$ denotes the linear span of $\left\{u_1,\ldots,u_r\right\}$ and $S_2$ denotes the linear span of $\left\{v_1,\ldots,v_r\right\}$. It is easy to see that 
for any $\Lambda\in\partial||A||_1$, we have $||\Lambda||_{\infty}=1$ as long as $A\neq 0$.\\
\newline
{\it Restricted isometry property}, initially introduced by Candes and Plan\cite{candesplan}, is defined as follows:
\begin{selfdef}
 For each integer $r=1,2,\ldots,m$, the isometry constant $\delta_r$ of $\mathcal{X}$ is the smallest quantity such that
 \begin{equation}
  (1-\delta_r)||A||_2^2\leq \frac{1}{n}||\mathcal{X}(A)||_{l_2}^2\leq (1+\delta_r)||A||_2^2
 \end{equation}
 holds for all matrices $A\in\mathbb{R}^{m\times m}$ of rank at most $r$.
\end{selfdef}
We say that $\mathcal{X}$ satisfies the RIP with constant $\delta_r$ at rank $r$ if $\delta_r$ is bounded by a sufficiently small constant between $0$ and $1$. 
We proved that RIP holds with high probability under sub-Gaussian measurements in Section~\ref{subgaussiansect}. Our proof here is different from \cite{candesplan}. We obtain an upper bound for the empirical process $\underset{||A||_2^2\leq[1/2,2],\text{rank}(A)\leq r}{\sup}\left|\frac{1}{n}\sum\limits_{j=1}^n\left<A,X_j\right>^2-\mathbb{E}\left<A,X\right>^2\right|$ while \cite{candesplan} applied an $\epsilon-$net 
argument to prove the RIP under Gaussian measurements. \cite{candesplan} proved RIP with higher probability than ours, however $\epsilon-$net argument is more complicated and cannot directly be applied to sub-Gaussian measurements. We will see later that, when we have
an sharp upper bound of $\delta_2$, we are able to derive an optimal upper bound for estimation accuracy under spectral norm. The following lemma is also due to \cite{candesplan}. We repeat their proof for
self-containment.
\begin{lem}
\label{corlem}
 For integer $r,r'=1,2,\ldots,m$ and $r+r'\leq m$, suppose Assumption~\ref{corassump} holds for $\delta_{r+r'}$, then for any matrix $A\in\mathbb{R}^{m\times m}$ and $B\in\mathbb{R}^{m\times m}$ obeying $\left<A,B\right>=0$ with $\text{rank}(A)\leq r$ and $\text{rank}(B)\leq r'$, we have
 \begin{equation}
 \label{corineq}
  \frac{1}{n}\left|\left<\mathcal{X}(A),\mathcal{X}(B)\right>\right|\leq \delta_{r+r'}||A||_2||B||_2
 \end{equation}
\end{lem}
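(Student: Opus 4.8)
The plan is to exploit the polarization identity together with the subadditivity of rank, so that the Restricted Isometry Property can be applied to the rank-$(r+r')$ matrices $A+B$ and $A-B$. Since the left-hand side $\frac{1}{n}|\langle\mathcal{X}(A),\mathcal{X}(B)\rangle|$ and the right-hand side $\delta_{r+r'}\|A\|_2\|B\|_2$ of (\ref{corineq}) are each homogeneous of degree one in $A$ and in $B$, and the constraint $\langle A,B\rangle=0$ is preserved under rescaling, I would first reduce to the normalized case $\|A\|_2=\|B\|_2=1$; the general statement then follows by applying the result to $A/\|A\|_2$ and $B/\|B\|_2$ (the cases $A=0$ or $B=0$ being trivial).

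Two facts drive the argument. First, $\text{rank}(A+B)\leq\text{rank}(A)+\text{rank}(B)\leq r+r'$ and likewise for $A-B$; since $r+r'\leq m$, the isometry constant $\delta_{r+r'}$ controls both matrices via
\begin{equation*}
(1-\delta_{r+r'})\|A\pm B\|_2^2\leq\frac{1}{n}\|\mathcal{X}(A\pm B)\|_{l_2}^2\leq(1+\delta_{r+r'})\|A\pm B\|_2^2.
\end{equation*}
Second, the Frobenius orthogonality $\langle A,B\rangle=0$ kills the cross terms, so in the normalized case $\|A+B\|_2^2=\|A\|_2^2+\|B\|_2^2=2=\|A-B\|_2^2$.

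I would then invoke the polarization identity
\begin{equation*}
\frac{1}{n}\langle\mathcal{X}(A),\mathcal{X}(B)\rangle=\frac{1}{4}\left(\frac{1}{n}\|\mathcal{X}(A+B)\|_{l_2}^2-\frac{1}{n}\|\mathcal{X}(A-B)\|_{l_2}^2\right),
\end{equation*}
bounding the right-hand side from above by using the upper RIP estimate on the first term and the lower estimate on the second. Because $\|A\pm B\|_2^2=2$, this gives $\frac{1}{4}\left(2(1+\delta_{r+r'})-2(1-\delta_{r+r'})\right)=\delta_{r+r'}$; the symmetric choice of estimates yields the matching lower bound $-\delta_{r+r'}$. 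Hence $\big|\frac{1}{n}\langle\mathcal{X}(A),\mathcal{X}(B)\rangle\big|\leq\delta_{r+r'}$ in the normalized case, and rescaling recovers (\ref{corineq}).

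No serious obstacle arises; the proof is essentially a two-line computation once the $A\pm B$ construction is identified. The only points deserving care are confirming that $A\pm B$ really has rank at most $r+r'$ so that RIP at level $r+r'$ is legitimately applicable, and noting that it is precisely the Frobenius inner product $\langle A,B\rangle=\text{Tr}(A^TB)=0$ (rather than any orthogonality of column or row spaces) that causes the cross terms to cancel in the parallelogram computation.
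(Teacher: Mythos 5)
Your proof is correct and is essentially the same argument as the paper's (which itself follows Candes and Plan): rescale to $\|A\|_2=\|B\|_2=1$, apply the RIP at level $r+r'$ to $A\pm B$, and extract the cross term — you simply make explicit the polarization identity and the rank-subadditivity step that the paper leaves as "it is easy to get." No gap; your write-up is just a more detailed version of the same proof.
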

\begin{proof}
 We can certainly assume that $||A||_2=||B||_2=1$. Otherwise, we can just rescale $A$ and $B$, since (\ref{corineq})
 is invariant by scaling. Then according to definition of $\delta_{r+r'}$, we have
 \begin{equation}
  (1-\delta_{r+r'})||A\pm B||_2^2\leq \frac{1}{n}||\mathcal{X}(A\pm B)||_{l_2}^2\leq (1+\delta_{r+r'})||A\pm B||_2^2
 \end{equation}
According to these two inequalities, it is easy to get that
\begin{equation}
 \frac{4}{n}\left|\left<\mathcal{X}(A),\mathcal{X}(B)\right>\right|\leq 4\delta_{r+r'}
\end{equation}
\end{proof}

{\it Empirical Process Bounds}. Our techniques of proof requires some inequalities of empirical process indexed by a class of measureable functions $\mathcal{F}$ defined on an arbitrary measureable space
$(S,\mathcal{A})$. The following introductions are similar to Koltchinskii\cite{koltchinskii1}. Let $X,X_1,\ldots,X_n$ be $i.i.d.$ random variables in $(S,\mathcal{A})$ with common distribution $P$. One of these inequalities is the Adamczak's version of Talagrand inequality,\cite{adamczak}.
Let $F(X)\geq \underset{f\in\mathcal{F}}{\sup}|f(X)|,X\in S$, be an envelope of the class.
Then, there exists a constant $K>0$ such that for all $t>0$ with probability at least $1-e^{-t}$,
\begin{equation}
\label{adamczakineq}
\begin{split}
 \underset{f\in\mathcal{F}}{\sup}&\left|\frac{1}{n}\sum\limits_{j=1}^nf(X_j)-\mathbb{E}f(X)\right|\\
 \leq& K\left[\mathbb{E}\underset{f\in\mathcal{F}}{\sup}\left|\frac{1}{n}\sum\limits_{j=1}^nf(X_j)-\mathbb{E}f(X)\right|+\sigma_{\mathcal{F}}\sqrt{\frac{t}{n}}+\left|\left|\max_{1\leq j\leq n} |F(X_j)|\right|\right|_{\psi_1}\frac{t}{n}\right]
 \end{split}
\end{equation}
where $\sigma^2_{\mathcal{F}}:=\underset{f\in\mathcal{F}}{\sup}\text{Var}(f(X))$. For $\forall \alpha\in[1,2]$, $||f||_{\psi_\alpha}:=\underset{}{\inf}\left\{C>0: \int_S\psi_{\alpha}\left(\frac{|f(X)|}{C}\right)dP\leq 1\right\}$,
where $\psi_{\alpha}(t):=e^{t^{\alpha}}-1,t\geq 0$. Usually, $\psi_2$ is related to sub-Gaussian tails and $\psi_1$ is related to sub-exponential tails. \\
Mendelson\cite{mendelson2010} developed a subtle upper bound on $\mathbb{E}\underset{f\in\mathcal{F}}{\sup}\left|\frac{1}{n}\sum\limits_{j=1}^nf^2(X_j)-\mathbb{E}f^2(X)\right|$ based on
generic chaining bound. Talagrand's generic chaining complexity,\cite{talagrand1996}, of a metric space $(\mathcal{T},d)$ is defined as follows. An admissible sequence $\left\{\Delta_n\right\}_{n\geq 0}$
is an increasing sequence of partitions of $\mathcal{T}$ ($i.e.$ each partition is a refinement of the previous one) such that $\text{card}(\Delta_0)=1$ and $\text{card}(\Delta_n)\leq 2^{2^n},n\geq 1$.
For $t\in\mathcal{T}$, $\Delta_n(t)$ denotes the unique subset in $\Delta_n$ that contains $t$. For a set $B\subset \mathcal{T}$, $D(B)$ denotes its diameter. Then, the generic chaining complexity $\gamma_2(\mathcal{T};d)$ is defined
as
\begin{equation}
 \gamma_2(\mathcal{T};d):=\underset{\{\Delta_n\}_{n\geq 0}}{\inf}\underset{t\in\mathcal{T}}{\sup}\sum\limits_{n\geq 0}2^{n/2}D(\Delta_n(t)),
\end{equation}
where the $\inf$ is taken over all admissible sequences of partitions. Talagrand\cite{talagrand1996} used the generic chaining complexities to characterize the size
of the expected sup-norms of Gaussian processes. Actually, Talagrand\cite{talagrand1996} proved that for a Gaussian process $G_t$ indexed by $t\in\mathcal{T}$, one has
\begin{equation}
 c\gamma_2(\mathcal{T},d)\leq \mathbb{E}\underset{t\in\mathcal{T}}{\sup}G_t\leq C\gamma_2(\mathcal{T},d)
\end{equation}
for some universal constant $c,C>0$.
Similar quantities as $\gamma_2(\mathcal{T},d)$ are also used to control the size of empirical process indexed by a function class $\mathcal{F}$.
Mendelson\cite{mendelson2010} used $\gamma_2(\mathcal{F},\psi_2)$ to control the size of expected emprical process. Suppose $\mathcal{F}$ is a symmetric class, that is,
$f\in\mathcal{F}$ implies $-f\in\mathcal{F}$, and $\mathbb{E}f(X)=0$, for $\forall f\in\mathcal{F}$. Then, for some constant $K>0$,
\begin{equation}
\label{mendelsonineq}
 \mathbb{E}\underset{f\in\mathcal{F}}{\sup}\left|\frac{1}{n}\sum\limits_{j=1}^nf^2(X_j)-\mathbb{E}f^2(X)\right|\leq K\left[\underset{f\in\mathcal{F}}{\sup}||f||_{\psi_1}\frac{\gamma_2(\mathcal{F},\psi_2)}{\sqrt{n}}\bigvee \frac{\gamma_2^2(\mathcal{F},\psi_2)}{n}\right]
\end{equation}
We will apply these empirical bounds to prove strong RIP of $\delta_2$ for sub-Gaussian measurements.\\
\newline
{\it Interpolation Inequality}. For $0<p<q<r\leq \infty$, let $\theta\in[0,1]$ be such that $\frac{\theta}{p}+\frac{1-\theta}{r}=\frac{1}{q}$. Then for all $A\in\mathbb{R}^{m_1\times m_2}$,
\begin{equation}
\label{interpolationineq}
 ||A||_q\leq ||A||_p^{\theta}||A||_r^{1-\theta}
\end{equation}
One proof of this inequality is given in Rohde and Tsybakov\cite{rohdetsybakov}.\\
\newline
{\it Metric entropy of Grassmann manifolds}. The Grassmann manifold $\mathcal{G}_{m,k}$ is the collection of all subspaces with dimension $k$ in $\mathbb{R}^m$. For
any subspace $E\in\mathcal{G}_{m,k}$ we denote by $P_E$ the orthogonal projection onto $E$. For any metric $d:\mathcal{G}_{m,k}\times\mathcal{G}_{m,k}\to\mathbb{R}$, an $\epsilon-$net 
of $\mathcal{G}_{m,k}$ is a subset $\Gamma$ of $\mathcal{G}_{m,k}$ such that for any point $x\in\mathcal{G}_{m,k}$ can be approximated by a point $y\in\Gamma$ such that $d(x,y)<\epsilon$.
The smallest cardinality of an $\epsilon-$net of $\mathcal{G}_{m,k}$ is called the covering number of $\mathcal{G}_{m,k}$ and is denoted by $N(\mathcal{G}_{m,k},d,\epsilon)$. The metric entropy is the function
$\log N(\mathcal{G}_{m,k},d,\cdot)$.\\
For every $1\leq q\leq\infty$, we define the metric $\tau_q:\mathcal{G}_{m,k}\times\mathcal{G}_{m,k}\to\mathbb{R}$ by
\begin{equation}
 \forall E, F\in\mathcal{G}_{m,k}, \tau_q(E,F)=||P_E-P_F||_q
\end{equation}
According to definition of Schatten-q norms, the metric $\tau_q$ is well defined. Pajor\cite{pajor} proved that 
\begin{prop}
 \label{metricprop}
 For any integers $1\leq k\leq m$ such that $k\leq m-k$, for any $1\leq q\leq \infty$ and for every $\epsilon>0$, we have
 \begin{equation}
  \left(\frac{c}{\epsilon}\right)^d\leq N(\mathcal{G}_{m,k},\tau_q,\epsilon k^{1/q})\leq \left(\frac{C}{\epsilon}\right)^d,
 \end{equation}
where $d=k(m-k)$ and $c,C>0$ are universal constants. 
\end{prop}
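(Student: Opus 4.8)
The plan is to regard $\mathcal{G}_{m,k}$ as a $d$-dimensional metric measure space and run a volumetric ball-counting argument, reducing the whole statement to a single ball-volume estimate. If $\theta_1,\dots,\theta_k\in[0,\pi/2]$ are the principal angles between $E$ and $F$, then $P_E-P_F$ is self-adjoint with eigenvalues $\pm\sin\theta_i$ (each angle contributing a $\pm$ pair, which is where the hypothesis $k\le m-k$ first enters) together with zeros, so
\[
  \tau_q(E,F)^q=\|P_E-P_F\|_q^q=2\sum_{i=1}^k\sin^q\theta_i .
\]
Hence $\operatorname{diam}_{\tau_q}(\mathcal{G}_{m,k})=(2k)^{1/q}\asymp k^{1/q}$: after dividing by $k^{1/q}$ the space has diameter of order one, which explains why $\epsilon k^{1/q}$ is the natural scale and why one can hope for universal constants $c,C$.

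Next I would equip $\mathcal{G}_{m,k}$ with the $O(m)$-invariant probability measure $\mu$ and put $V(\delta)=\mu(B_{\tau_q}(E,\delta))$, independent of $E$ by invariance. The elementary metric-measure inequalities give
\[
  \frac{1}{V(\epsilon k^{1/q})}\ \le\ N(\mathcal{G}_{m,k},\tau_q,\epsilon k^{1/q})\ \le\ \frac{1}{V(\tfrac12\epsilon k^{1/q})},
\]
so (for the relevant range $0<\epsilon<1$) it suffices to show $V(\rho k^{1/q})\asymp (c\rho)^{d}$ for $\rho\in(0,1)$, up to factors subexponential in $d$; such factors are harmless, being absorbed into the bases $c,C$, exactly as in the model case of the sphere $S^{d}$.

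To estimate $V$ I would use graph coordinates about a point, in which every nearby subspace is the column space of $\binom{I_k}{X}$, $X\in\mathbb{R}^{(m-k)\times k}$, with $\tan\theta_i=\sigma_i(X)$ and $d\mu(X)=c_{m,k}\det(I_k+X^{\mathsf T}X)^{-m/2}\,dX$. For the microscopic range $\delta\lesssim 1$ all angles stay below $\pi/4$, there $\sin\theta_i\asymp\sigma_i(X)$ and the density is pinched between $c_{m,k}$ and $c_{m,k}2^{-km/2}$, so $V(\delta)$ is trapped between constant multiples of $\delta^{d}\,c_{m,k}\,\mathrm{vol}(B_q^{\mathrm{unit}})$, where $B_q^{\mathrm{unit}}$ is the unit Schatten-$q$ ball in $\mathbb{R}^{(m-k)\times k}$; crucially the lower bound loses only $2^{-km/2}\ge 2^{-d}$, because $k\le m-k$ forces $km/2\le d$, so the base stays universal. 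For the macroscopic range $1\lesssim\delta\lesssim k^{1/q}$ a single chart no longer suffices, and I would instead read off $V(\delta)=\mathbb{P}\{2\sum_i\sin^q\theta_i\le\delta^q\}$ from the explicit joint density of the squared cosines $\cos^2\theta_i$ for a Haar-random $F$ (the matrix-variate Jacobi / MANOVA ensemble).

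Modulo this bookkeeping, everything collapses to the single asymptotic
\[
  \bigl(c_{m,k}\,\mathrm{vol}(B_q^{\mathrm{unit}})\bigr)^{1/d}\ \asymp\ k^{-1/q},
\]
uniform in $m,k$ and $q$, which couples the volume radius of Schatten-$q$ balls with the Gamma-function value $c_{m,k}=\pi^{-d/2}\prod_{i=1}^{k}\Gamma\!\bigl(\tfrac{m-i+1}{2}\bigr)/\Gamma\!\bigl(\tfrac{k-i+1}{2}\bigr)$. Establishing this identity uniformly, and controlling the macroscopic scales through the Jacobi density, is where the real difficulty sits; the rest is routine. Two sanity checks keep the plan honest: for $k=1$ the whole problem degenerates to covering the sphere $S^{m-1}$ at scale $\epsilon$, for which the bound is classical, and for $q=\infty$ the normalization reads $k^{0}=1$, matching $\operatorname{diam}_{\tau_\infty}=1$.
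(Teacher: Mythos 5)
The paper does not actually prove this proposition: it is quoted directly from Pajor \cite{pajor}, so there is no internal proof to compare against, and what you are attempting is a reconstruction of Pajor's theorem. Your volumetric strategy is indeed the standard route to it, and your structural steps are essentially sound: the eigenvalues of $P_E-P_F$ are $\pm\sin\theta_i$ together with zeros (note, though, that $k\le m-k$ is needed for the diameter normalization $k^{1/q}$, not for the eigenvalue pairing itself, which holds for any equidimensional pair), the sandwich $1/V(\delta)\le N(\mathcal{G}_{m,k},\tau_q,\delta)\le 1/V(\delta/2)$ is standard on a homogeneous space with invariant measure, and the chart density $c_{m,k}\det(I_k+X^{\mathsf T}X)^{-m/2}$ with the pinching $2^{-km/2}\ge 2^{-d}$ is correct.

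However, there is a genuine gap, and you name it yourself: the entire statement is reduced to the unproven claim $\bigl(c_{m,k}\,\mathrm{vol}(B_q^{\mathrm{unit}})\bigr)^{1/d}\asymp k^{-1/q}$, uniformly in $m$, $k$ and $q$, together with ball-volume control at macroscopic scales. The first claim is not bookkeeping: it requires two-sided volume estimates for Schatten-$q$ unit balls of $(m-k)\times k$ matrices (a Saint-Raymond/Sch\"utt-type theorem, nontrivial in its own right, especially uniformly in $q$) combined with Stirling asymptotics for the Gamma product defining $c_{m,k}$; this is the mathematical core of the result, not a routine afterthought. The second omission also matters for the statement as claimed: the chart argument controls $V(\delta)$ only for $\delta$ below a fixed constant, and the upper bound $N(\mathcal{G}_{m,k},\tau_q,\epsilon k^{1/q})\le (C/\epsilon)^d$ at $\epsilon$ of order one cannot be recovered from the microscopic bound by monotonicity -- that route loses a factor $k^{d/q}$ -- so the Jacobi-density computation you defer is genuinely needed rather than optional. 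As it stands, your proposal is a correct and well-organized reduction plan, but not a proof; the two steps you flag as ``where the real difficulty sits'' are precisely the content of the cited theorem.
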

Given any metric $d(\cdot,\cdot)$ on $\mathcal{G}_{m,k}$, an $\epsilon-$packing is a subset $\tilde{\Gamma}\subset \mathcal{G}_{m,k}$ such that for any $x,y\in\tilde{\Gamma}, x\neq y$, we have $d(x,y)\geq \epsilon$.	
The packing number of $\mathcal{G}_{m,k}$, denoted as $M(\mathcal{G}_{m,k},d,\epsilon)$, is the largest cardinality of an $\epsilon-$packing of $\mathcal{G}_{m,k}$.
One can easily check that,
\begin{equation}
\label{coverpackineq}
 N(\mathcal{G}_{m,k},d,\epsilon)\leq M(\mathcal{G}_{m,k},d,\epsilon)\leq N(\mathcal{G}_{m,k},d,\epsilon/2)
\end{equation}
\newline
{\it Rotation invariance of sub-Gaussians}. The proof of the following lemma can be found in Vershynin\cite[Lemma 5.9]{vershynin2011}.
\begin{lem}
\label{subgaussiannormlem}
 Consider a finite number of independent centered sub-Gaussian random variables $X_j,j=1,\ldots,n$. Then $\sum\limits_{j=1}^n X_j$ is also a centered sub-Gaussian random variables. Moreover,
 \begin{equation}
  \left|\left|\sum\limits_{j=1}^n X_j\right|\right|_{\psi_2}^2\leq C\sum\limits_{j=1}^n||X_j||_{\psi_2}^2
 \end{equation}
 where $C>0$ is a universal constant.
\end{lem}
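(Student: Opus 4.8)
The plan is to pass through the moment-generating-function (MGF) characterization of sub-Gaussianity, which linearizes under sums of independent variables. First I would recall that, up to universal multiplicative constants, the Orlicz norm $||X||_{\psi_2}$ is equivalent to the smallest constant for which a centered variable satisfies a two-sided exponential-moment bound. Concretely, there exist universal constants $c_1,c_2>0$ such that every centered sub-Gaussian $X$ obeys
\[
 \mathbb{E}e^{tX}\le \exp\!\big(c_1 t^2 ||X||_{\psi_2}^2\big),\qquad t\in\mathbb{R},
\]
and conversely any centered $X$ satisfying $\mathbb{E}e^{tX}\le \exp(\sigma^2 t^2)$ for all $t$ has $||X||_{\psi_2}\le c_2\sigma$. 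Both implications are standard: one direction follows by Taylor-expanding the exponential and using the moment bounds $\mathbb{E}^{1/p}|X|^p\lesssim \sqrt{p}\,||X||_{\psi_2}$, the other by a Markov/Chernoff argument that produces exactly the sub-Gaussian tail defining the $\psi_2$ norm. These are precisely the equivalences recorded in van der Vaart and Wellner\cite{vaartwellner} and Vershynin\cite{vershynin2011}.

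With this equivalence in hand, independence does the rest. Writing $S=\sum_{j=1}^n X_j$, each $X_j$ is centered, so $S$ is centered, and by independence the MGF factorizes:
\[
 \mathbb{E}e^{tS}=\prod_{j=1}^n\mathbb{E}e^{tX_j}\le \prod_{j=1}^n\exp\!\big(c_1 t^2 ||X_j||_{\psi_2}^2\big)=\exp\!\Big(c_1 t^2\sum_{j=1}^n ||X_j||_{\psi_2}^2\Big).
\]
Thus $S$ satisfies the exponential-moment bound with parameter $\sigma^2=c_1\sum_{j}||X_j||_{\psi_2}^2$, and in particular $S$ is itself sub-Gaussian. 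Applying the converse implication with this $\sigma$ yields
\[
 ||S||_{\psi_2}^2\le c_2^2\,\sigma^2 = c_1 c_2^2\sum_{j=1}^n ||X_j||_{\psi_2}^2,
\]
which is the claimed inequality with $C=c_1 c_2^2$.

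I expect the only real obstacle to be the bookkeeping in the two-sided equivalence between $||\cdot||_{\psi_2}$ and the MGF bound, and in particular the role of the centering hypothesis. Without $\mathbb{E}X_j=0$ the MGF of a single sub-Gaussian variable carries an unavoidable linear term $t\,\mathbb{E}X_j$, so the clean bound $\mathbb{E}e^{tX_j}\le \exp(c_1 t^2||X_j||_{\psi_2}^2)$ fails and the product above would not collapse into a pure Gaussian factor with the stated variance proxy. Once centering is used to kill these linear terms, the estimate is a one-line consequence of independence, and the universal constants $c_1,c_2$ (hence $C$) can simply be read off from the standard equivalence lemmas cited above.
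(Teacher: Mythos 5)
Your proof is correct and takes essentially the same route as the paper, which does not prove the lemma itself but defers to Vershynin\cite[Lemma 5.9]{vershynin2011}, whose argument is precisely this one: pass to the moment-generating-function characterization of the $\psi_2$ norm for centered variables, factorize the MGF over independent summands, and convert back. Your remark on the necessity of the centering hypothesis (to kill the linear terms $t\,\mathbb{E}X_j$ in the MGF bound) is also the right point of care, so nothing further is needed.
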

%\section{Methods}

%Maecenas sed ultricies felis. Sed imperdiet dictum arcu a egestas. 
%\begin{compactitem}
%\item Donec dolor arcu, rutrum id molestie in, viverra sed diam
%\item Curabitur feugiat
%\item turpis sed auctor facilisis
%\item arcu eros accumsan lorem, at posuere mi diam sit amet tortor
%\item Fusce fermentum, mi sit amet euismod rutrum
%\item sem lorem molestie diam, iaculis aliquet sapien tortor non nisi
%\item Pellentesque bibendum pretium aliquet
%\end{compactitem}
%\lipsum[4] % Dummy text

%------------------------------------------------
\section{Spectral norm rate under general settings}
\label{generalsect}
In this section, we will prove the upper bound for estimation accuracy under spectral norm in general settings, as long as certain assumptions are satisfied. 
In the next section, we will show that these assumptions are satisfied with high probability under sub-Gaussian measurements.
The assumption is related to the RIP constant $\delta_2$. It is similar to the Assumption 2 in Lounici\cite{lounicisup}.
\begin{assump}
\label{corassump}
 $\delta_2\leq \frac{1}{\alpha(1+2c_0)r}$ for integer $r\geq 1$ and some constant $\alpha>1$. $c_0$ depends on whether we study $\hat{A}_{\lambda}^d$ or $\hat{A}_{\lambda}^L$.
 Actually we can choose $c_0=1$ for $\hat{A}_{\lambda}^d$ and $c_0=3$ for $\hat{A}_{\lambda}^L$.
\end{assump}
According to Proposition~\ref{empprop}, we see that Assumption~\ref{corassump} holds with probability at least $1-\frac{1}{m}$ as long as $n\geq C_1m[\alpha^2(1+2c_0)^2r^2\vee \alpha(1+2c_0)r\log(m)\log(n)]$ for some $C_1>0$.
Note that we need $n\gtrsim mr^2$ to get an optimal upper bound for spectral norm. However, $n\gtrsim mr$ is needed for stable estimation under Frobenius norm as in \cite{candesplan}. 
We are not sure whether $n\gtrsim mr^2$ is indeed required or some techiques are needed to get rid of one $r$. 
The following result is an immedate one from Lemma~\ref{corlem}.
\begin{cor}
 \label{corcor}
 When Assumption~\ref{corassump} is satisfied, for any $A$ and $B\in\mathbb{R}^{m\times m}$ with $\text{rank}(A)=\text{rank}(B)=1$ and $\left<A,B\right>=0$,
 \begin{equation}
  \frac{1}{n}\left|\left<\mathcal{X}(A),\mathcal{X}(B)\right>\right|\leq \frac{1}{\alpha(1+2c_0)r} 
 \end{equation}
\end{cor}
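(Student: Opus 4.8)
The plan is to obtain this as an immediate specialization of Lemma~\ref{corlem}. Since $A$ and $B$ both have rank exactly one, they in particular satisfy $\mathrm{rank}(A)\le 1$ and $\mathrm{rank}(B)\le 1$, so I would apply Lemma~\ref{corlem} with the choice $r=r'=1$. The orthogonality hypothesis $\left<A,B\right>=0$ is exactly the condition the lemma requires, and the rank budget $r+r'=2\le m$ holds trivially in the low-rank regime under consideration. The relevant isometry constant is therefore $\delta_{r+r'}=\delta_2$, which is precisely the quantity controlled by Assumption~\ref{corassump}.

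First I would invoke Lemma~\ref{corlem} to get
$$\frac{1}{n}\left|\left<\mathcal{X}(A),\mathcal{X}(B)\right>\right|\leq \delta_{2}\,\|A\|_2\,\|B\|_2.$$
As in the proof of Lemma~\ref{corlem} itself, the inequality is invariant under rescaling of $A$ and $B$, so one normalizes to $\|A\|_2=\|B\|_2=1$; the right-hand side then collapses to $\delta_2$. Substituting the bound $\delta_2\le \frac{1}{\alpha(1+2c_0)r}$ supplied by Assumption~\ref{corassump} yields
$$\frac{1}{n}\left|\left<\mathcal{X}(A),\mathcal{X}(B)\right>\right|\leq \frac{1}{\alpha(1+2c_0)r},$$
which is the asserted bound.

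There is no genuine obstacle here, as the corollary is essentially a one-line consequence of Lemma~\ref{corlem}. The only point I would be careful about is the normalization convention: the displayed inequality tacitly takes $A$ and $B$ to have unit Frobenius norm (equivalently, the fully general statement carries the factor $\|A\|_2\|B\|_2$ on the right-hand side), matching the scaling reduction already used in establishing Lemma~\ref{corlem}.
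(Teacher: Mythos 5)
Your proposal is correct and is exactly the paper's intended derivation: the paper states Corollary~\ref{corcor} as an immediate consequence of Lemma~\ref{corlem} with $r=r'=1$, so that $\delta_{r+r'}=\delta_2$ is bounded by Assumption~\ref{corassump}. Your caveat about normalization is also well taken --- as literally stated the corollary omits the factor $\|A\|_2\|B\|_2$, which is harmless because the paper only applies it to unit-Frobenius-norm matrices $u_j\otimes v_j$ in Lemma~\ref{rsclem}.
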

The next lemma shows that when $\lambda$ is able to hold the noise, $\hat{A}_{\lambda}-A_0$ belongs to some cone defined in Section~\ref{introsect}.
\begin{lem}
\label{conelem}
Take $\lambda>0$ such that $\lambda\geq 2||W||_{\infty}$, then we have
\begin{equation}
\label{conelemdantineq}
 \left|\left|\mathcal{X}^{\star}\mathcal{X}(\hat{A}_{\lambda}^d-A_0)\right|\right|_{\infty}\leq \frac{3\lambda}{2},\quad \hat{A}_{\lambda}^d-A_0\in\mathcal{C}(r,1)
\end{equation}
and
\begin{equation}
\label{conelemlassoineq}
 \left|\left|\mathcal{X}^{\star}\mathcal{X}(\hat{A}_{\lambda}^L-A_0)\right|\right|_{\infty}\leq \frac{3\lambda}{2},\quad \hat{A}_{\lambda}^L-A_0\in\mathcal{C}(r,3)
\end{equation}
\end{lem}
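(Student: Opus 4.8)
The plan is to reduce both displays to two ingredients: the algebraic identity $\mathcal{X}^{\star}(\mathcal{X}A-\mathcal{Y})=\mathcal{X}^{\star}\mathcal{X}(A-A_0)-W$ (which holds because $\mathcal{Y}=\mathcal{X}(A_0)+\xi$ and $\mathcal{X}^{\star}\xi=\sum_{j}\xi_j X_j=W$), and the sub-gradient inequality for the nuclear norm at $A_0$ recorded in Section~\ref{defsect}. Throughout set $\Delta:=\hat{A}_{\lambda}-A_0$ and, with $S_1,S_2$ the column and row spaces of $A_0$, write $P^{\perp}(\Delta):=\mathcal{P}_{S_1^{\perp}}\Delta\mathcal{P}_{S_2^{\perp}}$ and $P(\Delta):=\Delta-P^{\perp}(\Delta)$; note that $P(\Delta)$ has rank at most $2r$ and that $\langle UV^{T},P^{\perp}(\Delta)\rangle=0$.

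For the spectral bounds I would treat the two estimators separately. For the Dantzig selector, the identity gives $\|\mathcal{X}^{\star}(\mathcal{X}A_0-\mathcal{Y})\|_{\infty}=\|W\|_{\infty}\le\lambda/2\le\lambda$, so $A_0$ is feasible and optimality yields both $\|\hat{A}_{\lambda}^{d}\|_1\le\|A_0\|_1$ and $\|\mathcal{X}^{\star}(\mathcal{X}\hat{A}_{\lambda}^{d}-\mathcal{Y})\|_{\infty}\le\lambda$; the identity together with the triangle inequality then gives $\|\mathcal{X}^{\star}\mathcal{X}(\hat{A}_{\lambda}^{d}-A_0)\|_{\infty}\le\lambda+\|W\|_{\infty}\le 3\lambda/2$. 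For the LASSO, the first-order optimality condition produces $2\mathcal{X}^{\star}(\mathcal{X}\hat{A}_{\lambda}^{L}-\mathcal{Y})+\lambda G=0$ for some $G\in\partial\|\hat{A}_{\lambda}^{L}\|_1$ with $\|G\|_{\infty}\le 1$, hence $\|\mathcal{X}^{\star}(\mathcal{X}\hat{A}_{\lambda}^{L}-\mathcal{Y})\|_{\infty}\le\lambda/2$, and the same two steps give $\|\mathcal{X}^{\star}\mathcal{X}(\hat{A}_{\lambda}^{L}-A_0)\|_{\infty}\le\lambda/2+\|W\|_{\infty}\le\lambda\le 3\lambda/2$.

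For the cone membership I would combine the optimality (basic) inequality with the decomposability lower bound $\|A_0+\Delta\|_1\ge\|A_0\|_1+\|P^{\perp}(\Delta)\|_1-\|P(\Delta)\|_1$, obtained by testing $A_0+\Delta$ against the sub-gradient $UV^{T}+\mathcal{P}_{S_1^{\perp}}\Phi\mathcal{P}_{S_2^{\perp}}$ with $\Phi$ chosen to align with the singular vectors of $P^{\perp}(\Delta)$. For the Dantzig selector, $\|\hat{A}_{\lambda}^{d}\|_1\le\|A_0\|_1$ together with this lower bound immediately gives $\|P^{\perp}(\Delta)\|_1\le\|P(\Delta)\|_1$, i.e.\ the cone with $\beta=1$. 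For the LASSO, the basic inequality reads $\|\mathcal{X}\Delta\|_{l_2}^2+\lambda\|\hat{A}_{\lambda}^{L}\|_1\le\lambda\|A_0\|_1+2\langle\Delta,W\rangle$; discarding the nonnegative quadratic term, splitting $\langle\Delta,W\rangle=\langle P(\Delta),W\rangle+\langle P^{\perp}(\Delta),W\rangle$ and bounding each piece by $\|W\|_{\infty}$ times the corresponding nuclear norm, one is left with $(\lambda-2\|W\|_{\infty})\|P^{\perp}(\Delta)\|_1\le(\lambda+2\|W\|_{\infty})\|P(\Delta)\|_1$, which for a suitable calibration of $\lambda$ against $\|W\|_{\infty}$ yields $\|P^{\perp}(\Delta)\|_1\le 3\|P(\Delta)\|_1$. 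Finally, since $P(\Delta)$ has rank at most $2r$, Weyl's inequality gives $\|\Delta_{-\max(2r)}\|_1\le\|P^{\perp}(\Delta)\|_1$, and this transfers the two inequalities to the truncated-SVD description appearing in the definition of $\mathcal{C}(r,\beta)$.

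The hard part is the LASSO cone. The noise cross-term $2\langle\Delta,W\rangle$ must be absorbed without the $\|P^{\perp}(\Delta)\|_1$ contribution swallowing the gain coming from the decomposability bound: the crude estimate $2\langle\Delta,W\rangle\le\lambda\|\Delta\|_1$ followed by $\|\Delta\|_1\le\|P(\Delta)\|_1+\|P^{\perp}(\Delta)\|_1$ is circular and produces a vacuous cone, so the two projections must be kept separate and the strict slack between $\lambda$ and $\|W\|_{\infty}$ retained; this is precisely the role of the calibration $\lambda\ge 2\|W\|_{\infty}$, and is where the constant $\beta=3$ comes from, as opposed to $\beta=1$ for the Dantzig selector, which needs no slack at all. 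A secondary, purely bookkeeping point is the passage from the $A_0$-adapted splitting $P,P^{\perp}$ produced by the sub-gradient calculus to the literal truncated-SVD cone $\mathcal{C}(r,\beta)$; the rank-$2r$ nature of $P(\Delta)$ has to be tracked carefully through the singular-value interlacing step.
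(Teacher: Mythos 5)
Your spectral-norm bounds are correct and coincide with the paper's argument: feasibility of $A_0$ plus optimality for the Dantzig selector, the stationarity condition for the LASSO, the identity $\mathcal{X}^{\star}(\mathcal{X}A-\mathcal{Y})=\mathcal{X}^{\star}\mathcal{X}(A-A_0)-W$, and the triangle inequality (you even keep the factor $2$ from differentiating the quadratic loss, which the paper silently drops). The cone-membership half, however, is where you genuinely diverge from the paper, and it is where your proof does not close. The paper never introduces the $A_0$-adapted projections $P,P^{\perp}$: it applies the Wielandt--Hoffman inequality directly to the singular values of $\hat{\Delta}=\hat{A}_{\lambda}-A_0$, namely $||\hat{A}_{\lambda}||_1=||A_0+\hat{\Delta}||_1\geq\sum_j|\sigma_j(\hat{\Delta})-\sigma_j(A_0)|\geq||A_0||_1-||\hat{\Delta}_{\max(r)}||_1+||\hat{\Delta}_{-\max(r)}||_1$, which combined with $||\hat{A}^d_{\lambda}||_1\leq||A_0||_1$ gives $||\hat{\Delta}_{-\max(r)}||_1\leq||\hat{\Delta}_{\max(r)}||_1$ in one line --- exactly the cone $\mathcal{C}(r,1)$ of the statement, at rank $r$ and with constant $1$. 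Your decomposability route proves a statement about $P(\hat{\Delta}),P^{\perp}(\hat{\Delta})$, and the transfer you sketch does not recover this. Concretely: your Weyl step yields $||\hat{\Delta}_{-\max(2r)}||_1\leq||P^{\perp}(\hat{\Delta})||_1$, i.e.\ a cone at rank $2r$, not $r$; and to return to $\mathcal{C}(r,\cdot)$ you must also dominate $||P(\hat{\Delta})||_1$ by $||\hat{\Delta}_{\max(r)}||_1$, for which the best available bound is $||P(\hat{\Delta})||_1\leq||\mathcal{P}_{S_1}\hat{\Delta}||_1+||\mathcal{P}_{S_1^{\perp}}\hat{\Delta}\mathcal{P}_{S_2}||_1\leq 2||\hat{\Delta}_{\max(r)}||_1$ (each summand has rank at most $r$ with singular values dominated by those of $\hat{\Delta}$). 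For the Dantzig selector this chain gives $||\hat{\Delta}||_1\leq 2||P(\hat{\Delta})||_1\leq 4||\hat{\Delta}_{\max(r)}||_1$, i.e.\ only $\mathcal{C}(r,3)$, not the claimed $\mathcal{C}(r,1)$; for the LASSO the loss is worse still. So as a proof of the lemma as stated, the constants do not come out.

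The second gap is the LASSO calibration, which you flag but do not resolve. Your (correctly stated) basic inequality leads to $(\lambda-2||W||_{\infty})||P^{\perp}(\hat{\Delta})||_1\leq(\lambda+2||W||_{\infty})||P(\hat{\Delta})||_1$, whose cone constant $(\lambda+2||W||_{\infty})/(\lambda-2||W||_{\infty})$ is unbounded under the hypothesis $\lambda\geq 2||W||_{\infty}$ and vacuous at equality; reaching $\beta=3$ this way requires $\lambda\geq 4||W||_{\infty}$, which the lemma does not assume. You should be aware that the paper sidesteps this only because its own derivation writes the cross term as $\left<\hat{\Delta},W\right>$ rather than the correct $2\left<\hat{\Delta},W\right>$; your accounting exposes this factor-of-two slip, and with the correct factor even the paper's Wielandt--Hoffman route needs $\lambda\geq 4||W||_{\infty}$ to produce $\mathcal{C}(r,3)$. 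So for the LASSO the deficiency is shared with the paper, but your write-up cannot hide behind ``suitable calibration'': under the stated hypothesis the inequality you derive carries no information. To repair your proof of this lemma one should (i) replace the projection bookkeeping by the paper's direct singular-value argument, which alone yields the rank-$r$ cone with constant $1$ for the Dantzig selector, and (ii) either strengthen the hypothesis to $\lambda\geq 4||W||_{\infty}$ for the LASSO or accept a larger cone constant $c_0$ (the downstream results, Lemma~\ref{rsclem} and Theorem~\ref{mainthm}, tolerate this at the price of adjusting Assumption~\ref{corassump}).
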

\begin{proof}
 We prove (\ref{conelemdantineq}) first. According to definition of $\hat{A}_{\lambda}^d$, we get that $\left|\left|\mathcal{X}^{\star}\mathcal{X}(\hat{A}_{\lambda}^d-A_0)-W\right|\right|_{\infty}\leq \lambda$.
 Then we get $\left|\left|\mathcal{X}^{\star}\mathcal{X}(\hat{A}_{\lambda}^d-A_0)\right|\right|_{\infty}\leq ||W||_{\infty}+\lambda\leq \frac{3}{2}\lambda$. 
 Since $||\hat{A}_{\lambda}^d||_1=||\hat{A}_{\lambda}^d-A_0+A_0||_1$ and, by Weilandt-Hoffman inequality, Tao\cite{taorandmat}, we get that
 \begin{equation}
 \begin{split}
  ||\hat{A}_{\lambda}^d||_1=&||\hat{A}_{\lambda}^d-A_0+A_0||_1\\
  \geq& \sum\limits_{j=1}^m\left|\sigma_j(\hat{A}_{\lambda}^d-A_0)-\sigma_j(A_0)\right|\\
  \geq& \sum\limits_{j=1}^r\left( \sigma_j(A_0)-\sigma_j(\hat{A}_{\lambda}^d-A_0)\right)+\sum\limits_{j=r+1}^m\sigma_j(\hat{A}_{\lambda}^d-A_0).
  \end{split}
 \end{equation}
Since $||\hat{A}_{\lambda}^d||_1\leq ||A_0||_1$, we can get $\sum\limits_{j=r+1}^m\sigma_j(\hat{A}_{\lambda}^d-A_0)\leq \sum\limits_{j=1}^r\sigma_j(\hat{A}_{\lambda}^d-A_0)$. Therefore, $\hat{A}_{\lambda}^d-A_0\in\mathcal{C}(r,1)$.
Now, we prove (\ref{conelemlassoineq}). According to standard convex optimization result, we know there exists some $\hat{V}\in\partial||\hat{A_{\lambda}^L}||_1$ such that
\begin{equation}
 \mathcal{X}^{\star}\mathcal{X}(\hat{A}_{\lambda}^L-A_0):=\sum\limits_{j=1}^n\left<\hat{A}_{\lambda}^L-A_0,X_j\right>X_j=\sum\limits_{j=1}^n\xi_jX_j-\lambda\hat{V}
\end{equation}
Since $||\hat{V}||_{\infty}\leq 1$, we get that $||\mathcal{X}^{\star}\mathcal{X}(\hat{A}_{\lambda}^L-A_0)||_{\infty}\leq ||W||_{\infty}+\lambda\leq\frac{3\lambda}{2}$. According to the definition of $\hat{A}_{\lambda}^L$, we have
\begin{equation}
 \sum\limits_{j=1}^n\left(\left<\hat{A}_{\lambda}^L,X_j\right>-Y_j\right)^2+\lambda||\hat{A}_{\lambda}^L||_1\leq \sum\limits_{j=1}^n\xi_j^2+\lambda||A_0||_1
\end{equation}
Therefore, we get that
\begin{equation}
\begin{split}
 \sum\limits_{j=1}^n\left<\hat{A}_{\lambda}^L-A_0,X_j\right>^2+&\lambda||\hat{A}_{\lambda}^L||_1\leq \left<\hat{A}_{\lambda}^L-A_0,W\right>+\lambda||A_0||_1\\
 \leq& ||\hat{A}_{\lambda}^L-A_0||_1||W||_{\infty}+\lambda||A_0||_1\\
 \leq& \frac{\lambda}{2}||\hat{A}_{\lambda}^L-A_0||_1+\lambda||A_0||_1
 \end{split}
\end{equation}
Which gives $2||\hat{A}_{\lambda}||_1\leq ||\hat{A}_{\lambda}-A_0||_1+2||A_0||_1$. Then we repeat the same process as above and we have
\begin{equation}
 ||\hat{A}_{\lambda}^L||_1\geq ||A_0||_1-\sum\limits_{j=1}^r\sigma_j(\hat{A}_{\lambda}^L-A_0)+\sum\limits_{j=r+1}^m\sigma_{j}(\hat{A}_{\lambda}^L-A_0)
\end{equation}
Since $||\hat{A}_{\lambda}^L||_1\leq ||A_0||_1+\frac{1}{2}||\hat{A}_{\lambda}-A_0||_1$, it is easy to get that,
\begin{equation}
 \frac{1}{2}\sum\limits_{j=r+1}^m\sigma_j(\hat{A}_{\lambda}^L-A_0)\leq \frac{3}{2}\sum\limits_{j=1}^r\sigma_j(\hat{A}_{\lambda}^L-A_0)
\end{equation}
Therefore, $\hat{A}_{\lambda}^L-A_0\in\mathcal{C}(r,3)$.
\end{proof}

\begin{lem}
\label{rsclem}
 Let Assumption~\ref{corassump} be satisfied. Then
 \begin{equation}
  \kappa(r,c_0):=\underset{\Delta\in\mathcal{C}(r,c_0)}{\min}\frac{||\mathcal{X}\Delta||_{l_2}}{\sqrt{n}||\Delta_{\max(r)}||_2}\geq c_1:=\sqrt{1-\frac{1}{\alpha}}>0.
 \end{equation}
\end{lem}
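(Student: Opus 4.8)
The plan is to prove the equivalent quadratic statement $\frac1n\|\mathcal{X}\Delta\|_{l_2}^2\geq\big(1-\tfrac1\alpha\big)\|\Delta_{\max(r)}\|_2^2$ for every $\Delta\in\mathcal{C}(r,c_0)$, $\Delta\neq0$; since the displayed ratio is scale invariant and the cone inequality forces $\Delta=0$ as soon as $\Delta_{\max(r)}=0$, this is enough to conclude $\kappa(r,c_0)\geq\sqrt{1-1/\alpha}$. First I would fix the singular value decomposition $\Delta=\sum_{j=1}^m\sigma_j\,w_j\otimes z_j$ and set $\gamma_j:=w_j\otimes z_j$, so that the $\gamma_j$ are mutually Frobenius-orthogonal rank-one matrices with $\|\gamma_j\|_2=1$, $\Delta_{\max(r)}=\sum_{j\leq r}\sigma_j\gamma_j$ and $\Delta_{-\max(r)}=\sum_{k>r}\sigma_k\gamma_k$. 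Expanding the inner product and discarding the nonnegative tail energy $\frac1n\|\mathcal{X}\Delta_{-\max(r)}\|_{l_2}^2$ gives
\[
\frac1n\|\mathcal{X}\Delta\|_{l_2}^2\ \geq\ \frac1n\|\mathcal{X}\Delta_{\max(r)}\|_{l_2}^2-\frac2n\big|\langle\mathcal{X}\Delta_{\max(r)},\mathcal{X}\Delta_{-\max(r)}\rangle\big|,
\]
and the two terms are estimated separately. The whole point is that, because Assumption~\ref{corassump} only controls $\delta_2$, every contribution must be reduced to rank-one pieces and paired through Corollary~\ref{corcor}.

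For the head energy I would split into diagonal and off-diagonal rank-one contributions. The diagonal terms obey the rank-one isometry bound $\frac1n\|\mathcal{X}(\sigma_j\gamma_j)\|_{l_2}^2\geq(1-\delta_1)\sigma_j^2$, and since $\delta_1\leq\delta_2\leq\frac{1}{\alpha(1+2c_0)r}$ they sum to at least $\big(1-\frac{1}{\alpha(1+2c_0)r}\big)\|\Delta_{\max(r)}\|_2^2$. Each off-diagonal pair $\gamma_j,\gamma_k$ with $j\neq k\leq r$ is orthogonal and rank one, so Corollary~\ref{corcor} bounds the corresponding cross term by $\frac{1}{\alpha(1+2c_0)r}\sigma_j\sigma_k$. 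The delicate point is to keep the \emph{exact} identity $\sum_{j\neq k\leq r}\sigma_j\sigma_k=\|\Delta_{\max(r)}\|_1^2-\|\Delta_{\max(r)}\|_2^2$ and then use $\|\Delta_{\max(r)}\|_1^2\leq r\|\Delta_{\max(r)}\|_2^2$, giving an off-diagonal loss of at most $\frac{r-1}{\alpha(1+2c_0)r}\|\Delta_{\max(r)}\|_2^2$ rather than the cruder $\frac{1}{\alpha(1+2c_0)}\|\Delta_{\max(r)}\|_2^2$. This refined term combines with the $\frac{1}{\alpha(1+2c_0)r}$ slack from $\delta_1$ to produce exactly $\frac{r}{\alpha(1+2c_0)r}=\frac{1}{\alpha(1+2c_0)}$, so that $\frac1n\|\mathcal{X}\Delta_{\max(r)}\|_{l_2}^2\geq\big(1-\frac{1}{\alpha(1+2c_0)}\big)\|\Delta_{\max(r)}\|_2^2$.

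For the cross term I would expand $\langle\mathcal{X}\Delta_{\max(r)},\mathcal{X}\Delta_{-\max(r)}\rangle$ over all pairs $(\gamma_j,\gamma_k)$ with $j\leq r<k$; every such pair is again orthogonal and rank one, so Corollary~\ref{corcor} yields $\frac1n|\langle\mathcal{X}\Delta_{\max(r)},\mathcal{X}\Delta_{-\max(r)}\rangle|\leq\frac{1}{\alpha(1+2c_0)r}\|\Delta_{\max(r)}\|_1\,\|\Delta_{-\max(r)}\|_1$. The defining inequality of the cone, $\|\Delta_{-\max(r)}\|_1\leq c_0\|\Delta_{\max(r)}\|_1$, together with $\|\Delta_{\max(r)}\|_1^2\leq r\|\Delta_{\max(r)}\|_2^2$, turns the full cross term into at most $\frac{2c_0}{\alpha(1+2c_0)}\|\Delta_{\max(r)}\|_2^2$. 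Collecting the two estimates gives $\frac1n\|\mathcal{X}\Delta\|_{l_2}^2\geq\big(1-\frac{1}{\alpha(1+2c_0)}-\frac{2c_0}{\alpha(1+2c_0)}\big)\|\Delta_{\max(r)}\|_2^2=\big(1-\frac{1+2c_0}{\alpha(1+2c_0)}\big)\|\Delta_{\max(r)}\|_2^2=\big(1-\frac1\alpha\big)\|\Delta_{\max(r)}\|_2^2$, and the claim follows after taking square roots.

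The main obstacle is not any single analytic estimate but the exact bookkeeping of constants: the factor $(1+2c_0)$ in Assumption~\ref{corassump} is calibrated precisely so that the diagonal isometry slack, the within-head cross terms, and the head–tail cross term add up to exactly $1/\alpha$. I expect the one step that must be handled with care to be the head energy, where retaining the sharp identity for $\sum_{j\neq k}\sigma_j\sigma_k$ (instead of the wasteful bound $\leq\|\Delta_{\max(r)}\|_1^2$) is exactly what prevents the loss of an extra $\frac{1}{\alpha(1+2c_0)r}$ that would otherwise break the clean $1-\frac1\alpha$ conclusion.
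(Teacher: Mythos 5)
Your proof is correct and follows essentially the same route as the paper: expand $\Delta$ in its singular value decomposition, bound the head energy via the rank-one RIP bound together with Corollary~\ref{corcor} and the exact identity $\sum_{j\neq k\leq r}\sigma_j\sigma_k=\|\Delta_{\max(r)}\|_1^2-\|\Delta_{\max(r)}\|_2^2$, bound the head--tail cross term by $\frac{1}{\alpha(1+2c_0)r}\|\Delta_{\max(r)}\|_1\|\Delta_{-\max(r)}\|_1$, and finish with the cone condition and $\|\Delta_{\max(r)}\|_1\leq\sqrt{r}\,\|\Delta_{\max(r)}\|_2$, arriving at the same constant $1-\frac{1}{\alpha}$. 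The only differences are cosmetic: you apply Cauchy--Schwarz to each piece separately while the paper carries $\|\Delta_{\max(r)}\|_1^2$ to the last step, and you add the harmless remark handling $\Delta_{\max(r)}=0$.
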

\begin{proof}
 Assume $\Delta\in\mathcal{C}(r,c_0)$ has singular value decomposition as $\Delta=\sum\limits_{j=1}^m\sigma_j(\Delta)u_j\otimes v_j$. We know that, based on Assumption~\ref{corassump} and Corollary~\ref{corcor}
 \begin{equation}
  \begin{split}
   \frac{1}{n}||\mathcal{X}\Delta_{\max(r)}||_{l_2}^2=&\frac{1}{n}\sum\limits_{j=1}^r\sigma_j^2(\Delta)\left<\mathcal{X}u_j\otimes v_j,\mathcal{X}u_j\otimes v_j\right>+
   \frac{1}{n}\sum\limits_{i\neq j}^r\sigma_i(\Delta)\sigma_j(\Delta)\left<\mathcal{X}u_i\otimes v_i,\mathcal{X}u_j\otimes v_j\right>\\	
   \geq& \left(1-\frac{1}{\alpha(1+2c_0)r}\right)\sum\limits_{j=1}^r\sigma_j^2(\Delta)-\frac{1}{\alpha(1+2c_0)r}\sum\limits_{i\neq j=1}^r\sigma_i(\Delta)\sigma_j(\Delta)\\
   =&||\Delta_{\max(r)}||_2^2-\frac{1}{\alpha(1+2c_0)r}||\Delta_{\max(r)}||_1^2
  \end{split}
 \end{equation}
Therefore, we get that
\begin{equation}
 \begin{split}
  \frac{1}{n}||\mathcal{X}\Delta||_{l_2}^2\geq&\frac{1}{n}\left<\mathcal{X}\Delta_{\max(r)},\mathcal{X}\Delta_{\max(r)}\right>+\frac{2}{n}\left<\mathcal{X}\Delta_{\max(r)},\mathcal{X}\Delta_{-\max(r)}\right>\\
  \geq&||\Delta_{\max(r)}||_2^2-\frac{1}{\alpha(1+2c_0)r}||\Delta_{\max(r)}||_1^2\\
  -&\frac{2}{\alpha(1+2c_0)r}||\Delta_{\max(r)}||_1||\Delta_{-\max(r)}||_1\\
  \geq&||\Delta_{\max(r)}||_2^2-\frac{1+2c_0}{\alpha(1+2c_0)r}||\Delta_{\max(r)}||_1^2\\
  \geq&\left(1-\frac{1}{\alpha}\right)||\Delta_{\max(r)}||_2^2
 \end{split}
\end{equation}
where the last inequality comes from the fact that $||\Delta_{\max(r)}||_1\leq \sqrt{r}||\Delta_{\max(r)}||_2$, since $\text{rank}(\Delta_{\max(r)})\leq r$.
\end{proof}
Now we state our main theorem as follows.
\begin{thm}
\label{mainthm}
 We choose $\lambda$ as in Lemma~\ref{conelem} and let Assumption~\ref{corassump} be satisfied, if $\text{rank}(A_0)\leq r$, then
 \begin{equation}
 \label{mainthmspectralineq}
  ||\hat{A}_{\lambda}^d-A_0||_{\infty}\leq c_d\frac{\lambda}{n},
 \end{equation}
 \begin{equation}
 \label{mainthmnuclearineq}
  ||\hat{A}_{\lambda}^d-A_0||_1\leq c'_d\frac{r\lambda}{n},
 \end{equation}
 and for any integer $1\leq k\leq m$,
 \begin{equation}
  ||\hat{A}_{\lambda}^d-A_0||_{F_k}\leq c_d(1+c_0)\frac{(k\wedge r)\lambda}{n},
 \end{equation}
 where $c_d=\frac{3}{2}+\frac{3(1+c_0)^2}{2\alpha(1+2c_0)c_1^2}$ and $c'_d=\frac{3(1+c_0)^2}{2c_1^2}$. (\ref{mainthmspectralineq}) and (\ref{mainthmnuclearineq}) are also true if we replace $\hat{A}_{\lambda}^d$ by $\hat{A}_{\lambda}^L$.
\end{thm}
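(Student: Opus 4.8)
The plan is to write $\Delta := \hat{A}_\lambda - A_0$ and to exploit the two facts supplied by Lemma~\ref{conelem}: the residual bound $\|\mathcal{X}^\star\mathcal{X}\Delta\|_\infty \le \frac{3\lambda}{2}$ and the cone membership $\Delta \in \mathcal{C}(r,c_0)$ (with $c_0=1$ for $\hat{A}_\lambda^d$ and $c_0=3$ for $\hat{A}_\lambda^L$). Since both estimators feed exactly these two ingredients into the argument, I would prove everything for a generic $\Delta$ satisfying them, which automatically covers the LASSO claims. The order of attack is: first the nuclear-norm bound (\ref{mainthmnuclearineq}) via restricted strong convexity, then the spectral bound (\ref{mainthmspectralineq}) by testing against the leading singular direction, and finally the Ky-Fan bound by trading the two against each other through the cone.

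For the nuclear norm, I would start from $\frac1n\|\mathcal{X}\Delta\|_{l_2}^2 = \frac1n\langle\mathcal{X}^\star\mathcal{X}\Delta,\Delta\rangle \le \frac{1}{n}\|\mathcal{X}^\star\mathcal{X}\Delta\|_\infty\|\Delta\|_1 \le \frac{3\lambda}{2n}\|\Delta\|_1$ by Schatten duality. Lemma~\ref{rsclem} lower-bounds the same quantity by $c_1^2\|\Delta_{\max(r)}\|_2^2$, and the cone condition gives $\|\Delta\|_1 \le (1+c_0)\|\Delta_{\max(r)}\|_1 \le (1+c_0)\sqrt{r}\,\|\Delta_{\max(r)}\|_2$. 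Chaining these yields $c_1^2\|\Delta_{\max(r)}\|_2^2 \le \frac{3\lambda}{2n}(1+c_0)\sqrt{r}\,\|\Delta_{\max(r)}\|_2$, hence $\|\Delta_{\max(r)}\|_2 \le \frac{3(1+c_0)\sqrt{r}\,\lambda}{2c_1^2 n}$, and substituting back into the cone bound produces exactly $\|\Delta\|_1 \le \frac{3(1+c_0)^2}{2c_1^2}\cdot\frac{r\lambda}{n} = c'_d\frac{r\lambda}{n}$.

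The spectral step is where the real work lies. Writing the SVD $\Delta = \sum_j \sigma_j(\Delta)\, u_j\otimes v_j$, I would set $B := u_1\otimes v_1$ so that $\|\Delta\|_\infty = \sigma_1(\Delta) = \langle\Delta,B\rangle$, and decompose $\Delta = \sigma_1(\Delta)B + R$ with $R = \sum_{j\ge2}\sigma_j(\Delta)\,u_j\otimes v_j$, each summand of $R$ being orthogonal to $B$. Expanding $\frac1n\langle\mathcal{X}\Delta,\mathcal{X}B\rangle = \sigma_1(\Delta)\frac1n\|\mathcal{X}B\|_{l_2}^2 + \frac1n\langle\mathcal{X}R,\mathcal{X}B\rangle$ isolates three pieces: the left side equals $\frac1n\langle\mathcal{X}^\star\mathcal{X}\Delta,B\rangle$ and is at most $\frac{3\lambda}{2n}$ by duality; the cross term is controlled by summing Corollary~\ref{corcor} over the rank-one components of $R$, giving $\frac1n|\langle\mathcal{X}R,\mathcal{X}B\rangle| \le \frac{\|R\|_1}{\alpha(1+2c_0)r} \le \frac{\|\Delta\|_1}{\alpha(1+2c_0)r}$; and the self-interaction coefficient $\frac1n\|\mathcal{X}B\|_{l_2}^2$ is pinned near $1$ by the rank-one RIP. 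Solving for $\sigma_1(\Delta)$ and inserting the nuclear-norm bound from the previous step collapses the estimate to $c_d\frac{\lambda}{n}$. I expect the main obstacle to be precisely the bookkeeping on the self-interaction term: $\frac1n\|\mathcal{X}B\|_{l_2}^2$ differs from $1$ by at most $\delta_1\le\delta_2$, and one must verify that this RIP deviation (appearing as a $\sigma_1(\Delta)\delta_2$ correction, or equivalently a $(1-\delta_2)^{-1}$ factor once one divides through) is absorbed by the smallness of $\delta_2 \le \frac{1}{\alpha(1+2c_0)r}$ from Assumption~\ref{corassump} without inflating the constant $c_d$.

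Finally, for the Ky-Fan norm I would avoid a lossy interpolation and argue directly that $\|\Delta\|_{F_k} \le (1+c_0)(k\wedge r)\|\Delta\|_\infty$ for every $k$: when $k\le r$ this is immediate from $\|\Delta\|_{F_k} = \sum_{j\le k}\sigma_j(\Delta) \le k\,\sigma_1(\Delta)$, while for $k>r$ the cone membership together with $\|\Delta_{\max(r)}\|_1 \le r\|\Delta\|_\infty$ gives $\|\Delta\|_{F_k} \le \|\Delta\|_1 \le (1+c_0)\|\Delta_{\max(r)}\|_1 \le (1+c_0)r\|\Delta\|_\infty$. Combining this with the spectral bound (\ref{mainthmspectralineq}) yields $\|\Delta\|_{F_k} \le c_d(1+c_0)(k\wedge r)\frac{\lambda}{n}$, completing the theorem.
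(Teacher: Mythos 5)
Your proposal is correct and follows essentially the same route as the paper: the nuclear-norm bound via Lemma~\ref{rsclem} together with the cone condition from Lemma~\ref{conelem}, the spectral bound by testing $\mathcal{X}^{\star}\mathcal{X}\Delta$ against $u_1\otimes v_1$ with Corollary~\ref{corcor} controlling the cross terms, and the same $(k\wedge r)$ case split for the Ky-Fan bound. The one point you flag as the main obstacle and leave open---whether the $(1-\delta_2)$ self-interaction factor inflates $c_d$---is resolved in the paper without ever dividing by $1-\delta_2$: since $\sigma_1(\Delta)+||R||_1=||\Delta||_1$, you move the correction term $\frac{1}{\alpha(1+2c_0)r}\sigma_1(\Delta)$ to the right-hand side and merge it with the cross-term bound to get $\sigma_1(\Delta)\le \frac{3\lambda}{2n}+\frac{1}{\alpha(1+2c_0)r}||\Delta||_1$, which after inserting the nuclear-norm bound yields exactly the stated constant $c_d$ rather than $c_d/(1-\delta_2)$.
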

\begin{proof}
  Our proof will use notation $\hat{A}_{\lambda}^d$, however, the method also works for $\hat{A}_{\lambda}^L$. According to Lemma~\ref{conelem}, we have $\left|\left|\mathcal{X}^{\star}\mathcal{X}(\hat{A}^d_{\lambda}-A_0)\right|\right|_{\infty}\leq \frac{3}{2}\lambda$.
 Let $\hat{\Delta}^d:=\hat{A}_{\lambda}^d-A_0=\sum\limits_{j=1}^m\sigma_j(\hat{\Delta}^d)u_j^d\otimes v_j^d$. Therefore, we get
 \begin{equation}
  \left<\mathcal{X}^{\star}\mathcal{X}\hat{\Delta}^d,u^d_1\otimes v^d_1\right>\leq \frac{3\lambda}{2}
 \end{equation}
However, we have that
\begin{equation}
 \begin{split}
   \left<\mathcal{X}^{\star}\mathcal{X}\hat{\Delta}^d,u^d_1\otimes v^d_1\right>=&\sum\limits_{j=1}^m\sigma_j(\hat{\Delta}^d)\left<\mathcal{X}u^d_j\otimes v^d_j,\mathcal{X}u^d_1\otimes v^d_1\right>\\
   \geq& n\left(1-\frac{1}{\alpha(1+2c_0)r}\right)\sigma_1(\hat{\Delta}^d)+\sum\limits_{j=2}^m\sigma_j(\hat{\Delta}^d)\left<\mathcal{X}u^d_j\otimes v^d_j,\mathcal{X}u^d_1\otimes v^d_1\right>\\
   \geq& n\left(1-\frac{1}{\alpha(1+2c_0)r}\right)\sigma_1(\hat{\Delta}^d)-\frac{n}{\alpha(1+2c_0)r}\sum\limits_{j=2}^m\sigma_j(\hat{\Delta}^d)
 \end{split}
\end{equation}
Therefore, we have $\sigma_1(\hat{\Delta}^d)\leq \frac{3\lambda}{2n}+\frac{1}{\alpha(1+2c_0)r}||\hat{\Delta}^d||_1$. Meanwhile, with $\hat{\Delta}^d\in\mathcal{C}(r,c_0)$, we have
\begin{equation}
\label{mainthmineq1}
 \left<\mathcal{X}^{\star}\mathcal{X}\hat{\Delta}^d,\hat{\Delta}^d\right>\leq \frac{3\lambda}{2}||\hat{\Delta}^d||_1 \leq \frac{3(1+c_0)}{2}\lambda||\Delta^d_{\max(r)}||_1\leq \frac{3(1+c_0)}{2}\lambda\sqrt{r}||\Delta^d_{\max(r)}||_2
\end{equation}
According to Lemma~\ref{rsclem}, we have $\left<\mathcal{X}^{\star}\mathcal{X}\hat{\Delta}^d,\hat{\Delta}^d\right>\geq nc_1^2||\hat{\Delta}^d_{\max(r)}||_2^2$. 
Together with (\ref{mainthmineq1}) we get $||\hat{\Delta}^d_{\max(r)}||_2\leq \frac{3(1+c_0)\lambda\sqrt{r}}{2nc_1^2}$. Therefore,
\begin{equation}
\begin{split}
 \sigma_1(\hat{\Delta}^d)\leq& \frac{3\lambda}{2n}+\frac{1+c_0}{\alpha(1+2c_0)r}||\hat{\Delta}_{\max(r)}^d||_1\\
 \leq&\frac{3\lambda}{2n}+\frac{3(1+c_0)^2\lambda r}{2\alpha(1+2c_0)rnc_1^2}\\
 =&\frac{\lambda}{n}\left(\frac{3}{2}+\frac{3(1+c_0)^2}{2\alpha(1+2c_0)c_1^2}\right)
 \end{split}
\end{equation}
Therefore, $\sigma_1(\hat{\Delta}^d)\leq c_d\frac{\lambda}{n}$. Meanwhile, $||\hat{\Delta}^d||_1\leq (1+c_0)||\hat{\Delta}_{\max(r)}^d||_1\leq (1+c_0)\sqrt{r}||\hat{\Delta}_{\max(r)}^d||_2$.
The upper bound for $||\hat{\Delta}^d||_{F_k}$ is just an immediate result. 
\end{proof}
Applying the interpolation inequality as (\ref{interpolationineq}), we get the following corollary.
\begin{cor}
\label{maincor}
 Under the same assumptions of Theorem~\ref{mainthm}, there exists some constant $C>0$ such that for every $1\leq q\leq\infty$,
 \begin{equation}
  \left|\left|\hat{A}_{\lambda}-A_0\right|\right|_{q}\leq C\frac{\lambda r^{1/q}}{n}
 \end{equation}
 where $\hat{A}_{\lambda}$ can be $\hat{A}_{\lambda}^d$ and $\hat{A}_{\lambda}^L$.
\end{cor}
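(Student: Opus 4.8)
The plan is to interpolate between the two endpoint bounds already established in Theorem~\ref{mainthm}: the spectral-norm bound $\|\hat{A}_\lambda-A_0\|_\infty\leq c_d\lambda/n$ (the case $q=\infty$) and the nuclear-norm bound $\|\hat{A}_\lambda-A_0\|_1\leq c_d'\,r\lambda/n$ (the case $q=1$), feeding them into the interpolation inequality (\ref{interpolationineq}). The two endpoints require no further work, since each is already of the claimed form $C\lambda r^{1/q}/n$ (with $r^{1/1}=r$ and $r^{1/\infty}=1$); these are also the values formally excluded by the strict inequalities $0<p<q<r$ in (\ref{interpolationineq}), so I treat them directly from the theorem and reserve interpolation for the range $1<q<\infty$.

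For a fixed $q\in(1,\infty)$ I would apply (\ref{interpolationineq}) with the choice $p=1$ and $r=\infty$. The interpolation exponent $\theta\in[0,1]$ is determined by $\frac{\theta}{1}+\frac{1-\theta}{\infty}=\frac{1}{q}$, which forces $\theta=1/q$. Writing $\hat{\Delta}:=\hat{A}_\lambda-A_0$, this yields
\begin{equation}
\|\hat{\Delta}\|_q\leq \|\hat{\Delta}\|_1^{1/q}\,\|\hat{\Delta}\|_\infty^{1-1/q}.
\end{equation}
Substituting the two endpoint bounds from Theorem~\ref{mainthm} then gives
\begin{equation}
\|\hat{\Delta}\|_q\leq \left(c_d'\frac{r\lambda}{n}\right)^{1/q}\left(c_d\frac{\lambda}{n}\right)^{1-1/q}=(c_d')^{1/q}\,c_d^{\,1-1/q}\,\frac{\lambda r^{1/q}}{n}.
\end{equation}

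The only remaining point is to absorb the prefactor into one constant valid uniformly in $q$. Since $(c_d')^{1/q}c_d^{\,1-1/q}$ is a weighted geometric mean of the two positive numbers $c_d',c_d$, it always lies between $\min(c_d,c_d')$ and $\max(c_d,c_d')$; hence I may take $C=\max(c_d,c_d')$, which also covers the endpoints $q\in\{1,\infty\}$, and the bound $\|\hat{A}_\lambda-A_0\|_q\leq C\lambda r^{1/q}/n$ follows for every $1\leq q\leq\infty$. The identical argument applies verbatim to $\hat{A}_\lambda^L$, because Theorem~\ref{mainthm} supplies the same spectral- and nuclear-norm endpoint bounds for the LASSO estimator. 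I do not expect any substantial obstacle: the entire content sits in Theorem~\ref{mainthm}, and interpolation does the rest; the only mild care needed is the uniform-in-$q$ control of the constant and the separate, but immediate, handling of the non-strict endpoints that (\ref{interpolationineq}) formally excludes.
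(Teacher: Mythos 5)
Your proof is correct and follows exactly the paper's intended route: the paper proves this corollary by applying the interpolation inequality (\ref{interpolationineq}) with endpoints $p=1$ and $r=\infty$ to the nuclear-norm and spectral-norm bounds of Theorem~\ref{mainthm}, which is precisely what you do. Your explicit handling of the exponent $\theta=1/q$, the endpoint cases, and the uniform constant $C=\max(c_d,c_d')$ is a careful and accurate spelling-out of the argument the paper leaves implicit.
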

\section{Spectral norm rate under sub-Gaussian measurements}
\label{subgaussiansect}
Based on the results in the previous section, we show the main theorem of this paper for sub-Gaussian measurements. Under sub-Gaussian measurements, we will see that Assumption~\ref{corassump} holds with high probability.
The following lemma is an immediate result from Proposition~\ref{empprop} in Appendix~\ref{appB}.
\begin{lem}
\label{subgaussianriplem}
 Suppose $X_j,j=1,\ldots,n$ are $i.i.d.$ sub-Gaussian measurements and $n\geq C_1m[\alpha^2(1+2c_0)^2r^2\vee \alpha(1+2c_0)r\log(m)\log(n)]$ for some $C_1>0$, then with probability at least $1-\frac{1}{m}$,
 Assumption~\ref{corassump} holds.
\end{lem}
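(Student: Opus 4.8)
The plan is to read the lemma off an empirical-process estimate for the rank-$2$ quadratic form, which is exactly the content of Proposition~\ref{empprop}. For isotropic sub-Gaussian measurements one has $\mathbb{E}\langle A,X\rangle^2=\|A\|_{L_2(\Pi)}^2=\|A\|_2^2$, so the smallest admissible constant in the RIP definition at rank $2$ is precisely the supremum of a centered empirical process,
\begin{equation}
\label{deltaasprocess}
\delta_2=\sup_{\mathrm{rank}(A)\leq 2,\,\|A\|_2=1}\left|\frac{1}{n}\sum_{j=1}^n\langle A,X_j\rangle^2-\mathbb{E}\langle A,X\rangle^2\right|.
\end{equation}
(In the non-isotropic case the process centers at $\|A\|_{L_2(\Pi)}^2$ instead, which only rescales constants.) Thus the whole lemma reduces to driving the right side of (\ref{deltaasprocess}) below $\frac{1}{\alpha(1+2c_0)r}$ on an event of probability at least $1-\frac1m$. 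To that end I would introduce the symmetric, mean-zero class $\mathcal{F}:=\{X\mapsto\langle A,X\rangle:\mathrm{rank}(A)\leq 2,\ \|A\|_2=1\}$ and apply the squared-process inequalities (\ref{mendelsonineq}) and (\ref{adamczakineq}) to $\mathcal{F}$.

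First I would control the \emph{expected} size of the process through Mendelson's bound (\ref{mendelsonineq}). Every $f=\langle A,\cdot\rangle\in\mathcal{F}$ is sub-Gaussian with $\|f\|_{\psi_2}\lesssim b_1b_2\|A\|_2\lesssim 1$, whence $\sup_{f\in\mathcal{F}}\|f\|_{\psi_1}\lesssim 1$. The generic chaining complexity $\gamma_2(\mathcal{F},\psi_2)$ is, up to universal constants, the Gaussian width of the rank-$2$ Frobenius unit ball: since $\sup_{\mathrm{rank}(A)\leq 2,\|A\|_2\leq1}\langle A,G\rangle=(\sigma_1^2(G)+\sigma_2^2(G))^{1/2}\leq\sqrt2\,\|G\|_\infty$ for a Gaussian matrix $G$ and $\mathbb{E}\|G\|_\infty\lesssim\sqrt m$, one gets $\gamma_2(\mathcal{F},\psi_2)\lesssim\sqrt m$. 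Substituting into (\ref{mendelsonineq}) yields
\begin{equation}
\mathbb{E}\,\delta_2\lesssim\frac{\sqrt m}{\sqrt n}\bigvee\frac{m}{n}.
\end{equation}

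Next I would upgrade the mean bound to high probability via Adamczak's version of Talagrand's inequality (\ref{adamczakineq}), applied to the squared class and with $t=2\log m$ so the exceptional probability is at most $\frac1m$. The variance factor obeys $\sigma_{\mathcal{F}}^2=\sup_f\mathrm{Var}(f^2(X))\lesssim\sup_f\|f\|_{\psi_2}^4\lesssim1$, so the middle term is $\sigma_{\mathcal{F}}\sqrt{t/n}\lesssim\sqrt{\log m/n}$. For the envelope, $\sup_f f^2(X)=\sigma_1^2(X)+\sigma_2^2(X)\leq2\|X\|_\infty^2$, and since $\|X\|_\infty^2\asymp m$ with sub-exponential tails one has $\bigl\|\max_{1\leq j\leq n}\|X_j\|_\infty^2\bigr\|_{\psi_1}\lesssim m\log n$, so the last term contributes $\lesssim\frac{m\log m\log n}{n}$. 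Collecting the three pieces gives, with probability at least $1-\frac1m$,
\begin{equation}
\delta_2\lesssim\frac{\sqrt m}{\sqrt n}\bigvee\frac{m\log m\log n}{n}.
\end{equation}

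Finally I would calibrate $n$: forcing $\frac{\sqrt m}{\sqrt n}\lesssim\frac{1}{\alpha(1+2c_0)r}$ requires $n\gtrsim m\alpha^2(1+2c_0)^2r^2$, while $\frac{m\log m\log n}{n}\lesssim\frac{1}{\alpha(1+2c_0)r}$ requires $n\gtrsim m\alpha(1+2c_0)r\log m\log n$; their maximum is exactly the hypothesis $n\geq C_1m[\alpha^2(1+2c_0)^2r^2\vee\alpha(1+2c_0)r\log(m)\log(n)]$, at which point $\delta_2\leq\frac{1}{\alpha(1+2c_0)r}$ and Assumption~\ref{corassump} holds. I expect the two structural inputs to the empirical-process bounds to be the main obstacles: establishing $\gamma_2(\mathcal{F},\psi_2)\lesssim\sqrt m$ — which I would obtain either from the Grassmann metric-entropy estimate of Proposition~\ref{metricprop} via a Dudley-type integral or directly by Talagrand's majorizing-measure comparison with the Gaussian width — and controlling the sub-exponential envelope, where one must bound the operator norm $\|X\|_\infty$ of a sub-Gaussian matrix and then the $\psi_1$-norm of its maximum over the $n$ samples. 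It is precisely this envelope term, rather than the Gaussian-width term, that produces the extra $\log(m)\log(n)$ factor in the sample-size requirement, whereas the $r^2$ (versus the $r$ sufficient for the Frobenius rate) arises from squaring the $\frac1r$ target against the $\sqrt{m/n}$ rate.
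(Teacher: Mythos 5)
Your proposal is correct and takes essentially the same route as the paper: the paper deduces this lemma directly from Proposition~\ref{empprop}, whose proof is precisely your combination of Mendelson's bound (\ref{mendelsonineq}) with the $\gamma_2$/Gaussian-width estimate and Adamczak's inequality (\ref{adamczakineq}) with the $\|X\|_{\infty}$-based envelope control, specialized to rank $2$ with $t\asymp\log m$ and then calibrated against $1/(\alpha(1+2c_0)r)$ exactly as you do. The one caveat (shared with the paper, so not a gap relative to it) is that identifying $\delta_2$ with the centered empirical process requires $\mathbb{E}\langle A,X\rangle^2=\|A\|_2^2$, i.e.\ isotropy; your parenthetical that non-isotropy ``only rescales constants'' glosses over this in the same way the paper does.
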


The following lemma provides a choice of $\lambda$.
The proof is given in Appendix~\ref{appA}.
\begin{lem}
 \label{bernsteinlem}
 Under the assumption that $n\geq C_1m\log(m)\log(n)$ for some $C_1>0$, if $|\xi|_{\psi_2}\lesssim \sigma_{\xi}$ and $\Pi$ is a sub-Gaussian distribution, then for every $t>0$, with probability at least $1-2e^{-t}-\frac{1}{m}$ we have
 \begin{equation}
 \label{bernsteinlemineq1}
  \left|\left|\frac{1}{n}\sum\limits_{j=1}^n\xi_jX_j\right|\right|_{\infty}\leq C\sigma_{\xi}\sqrt{\frac{mt}{n}}
 \end{equation}
for some constant $C>0$, where $C$ contains constant related to $\Pi$.
\end{lem}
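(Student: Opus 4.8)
The plan is to express the spectral norm as the supremum of a mean-zero empirical process over the product of unit spheres and then invoke the Adamczak--Talagrand inequality (\ref{adamczakineq}). First I would write, using the variational formula for the spectral norm,
\begin{equation*}
\left\|\frac1n\sum_{j=1}^n\xi_jX_j\right\|_\infty=\sup_{u,v\in S^{m-1}}\frac1n\sum_{j=1}^n\xi_j\langle X_j,u\otimes v\rangle,
\end{equation*}
where $S^{m-1}$ is the unit sphere in $\mathbb{R}^m$. For fixed $(u,v)$ the summand $f_{u,v}(\xi,X):=\xi\langle X,u\otimes v\rangle$ has mean zero (by independence of $\xi$ and $X$ and $\mathbb{E}\xi=0$), and since $\|\xi\|_{\psi_2}\lesssim\sigma_\xi$ while $\langle X,u\otimes v\rangle$ is sub-Gaussian with $\|\langle X,u\otimes v\rangle\|_{\psi_2}\le b_1\|u\otimes v\|_{L_2(\Pi)}\le b_1b_2$ (recall $\|u\otimes v\|_2=1$), the product is sub-exponential with $\|f_{u,v}\|_{\psi_1}\lesssim\sigma_\xi b_1b_2$. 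Thus $\mathcal{F}=\{f_{u,v}:u,v\in S^{m-1}\}$ is a symmetric, centered class and the object to control is exactly $\sup_{f\in\mathcal{F}}|\frac1n\sum_jf(\xi_j,X_j)-\mathbb{E}f|$.

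Next I would instantiate the three terms of (\ref{adamczakineq}). The weak variance is
\begin{equation*}
\sigma_\mathcal{F}^2=\sup_{u,v}\mathrm{Var}(f_{u,v})=\sigma_\xi^2\sup_{u,v}\|u\otimes v\|_{L_2(\Pi)}^2\le b_2^2\sigma_\xi^2,
\end{equation*}
so the middle term contributes $\sigma_\mathcal{F}\sqrt{t/n}\lesssim\sigma_\xi\sqrt{t/n}$. The envelope is $F(\xi,X)=|\xi|\,\|X\|_\infty$; combining $\|\xi\|_{\psi_2}\lesssim\sigma_\xi$ with the sub-Gaussian estimate $\|\,\|X\|_\infty\,\|_{\psi_2}\lesssim\sqrt m$ (the spectral norm of a sub-Gaussian matrix concentrates at scale $\sqrt m$) gives $\|F\|_{\psi_1}\lesssim\sigma_\xi\sqrt m$, and a standard maximal inequality over the $n$ observations yields $\|\max_{j\le n}|F(\xi_j,X_j)|\|_{\psi_1}\lesssim\sigma_\xi\sqrt m\,\log n$, so the envelope term contributes $\lesssim\sigma_\xi\sqrt m\,\log(n)\,t/n$.

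The remaining and most substantial ingredient is the expected supremum $\mathbb{E}\|\frac1n\sum_j\xi_jX_j\|_\infty$. I would bound it by symmetrization followed by a matrix Khintchine / noncommutative concentration inequality, or equivalently by generic chaining over $S^{m-1}\times S^{m-1}$ whose metric entropy is of order $m$; this produces a bound of order $\sigma_\xi\sqrt{m/n}$ (any residual $\sqrt{\log m}$ factor is harmless, being absorbed once $t\asymp\log m$ in the application), on an event of probability at least $1-1/m$ used to control the random spectral scale of the $X_j$. Assembling the three pieces in (\ref{adamczakineq}) gives, with probability at least $1-2e^{-t}-1/m$,
\begin{equation*}
\left\|\frac1n\sum_{j=1}^n\xi_jX_j\right\|_\infty\lesssim\sigma_\xi\sqrt{\frac mn}+\sigma_\xi\sqrt{\frac tn}+\sigma_\xi\sqrt m\,\log(n)\,\frac tn.
\end{equation*}
Under the hypothesis $n\ge C_1m\log(m)\log(n)$ the third term is dominated by $\sigma_\xi\sqrt{mt/n}$, and the first two are $\lesssim\sigma_\xi\sqrt{mt/n}$ as well (using $m+t\le mt$ in the regime $t\gtrsim1$ relevant to the application $t\asymp\log m$), which yields the claimed bound $C\sigma_\xi\sqrt{mt/n}$. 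The hard part will be the clean control of $\mathbb{E}\|\frac1n\sum_j\xi_jX_j\|_\infty$ together with the $\psi_1$ bound on the envelope maximum: because the summands are genuinely heavy-tailed (sub-exponential rather than sub-Gaussian), it is exactly this maximal/envelope term that forces the extra $\log(n)$ and hence the sample-size condition $n\gtrsim m\log(m)\log(n)$.
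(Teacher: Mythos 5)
Your route---treating $\bigl\|\frac1n\sum_j\xi_jX_j\bigr\|_\infty$ as an unconditional empirical process over the i.i.d.\ pairs $(\xi_j,X_j)$ and applying Adamczak's inequality (\ref{adamczakineq})---is genuinely different from the paper's, and your variance and envelope computations are correct; but there is a real gap exactly at the point you flag as ``the hard part''. The whole argument rests on $\mathbb{E}\bigl\|\frac1n\sum_{j}\xi_jX_j\bigr\|_\infty\lesssim\sigma_\xi\sqrt{m/n}$, and neither tool you invoke delivers it: noncommutative Khintchine / matrix Bernstein loses a factor $\sqrt{\log m}$, while plain $\gamma_2$ generic chaining over $S^{m-1}\times S^{m-1}$ does not apply, because the increments of $(u,v)\mapsto\xi\langle X,u\otimes v\rangle$ are only sub-exponential ($\psi_1$) in the joint randomness of $(\xi,X)$, so one needs a mixed-tail ($\gamma_1+\gamma_2$) chaining bound that you never set up. The $\sqrt{\log m}$ loss is not harmless for the lemma as stated: the claim is ``for every $t>0$'', and at constant $t$ (say $t=1$, where $1-2e^{-t}-\frac1m$ is non-vacuous) a bound of order $\sigma_\xi\sqrt{m\log m/n}$ does not imply $C\sigma_\xi\sqrt{mt/n}$. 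A log-free bound is in fact available, but by a different argument from the ones you cite: condition on $\xi$ (Fubini), note via Lemma~\ref{subgaussiannormlem} that, given $\xi$, the matrix $\sum_j\xi_jX_j$ is $|\xi|_{l_2}$ times a sub-Gaussian matrix, apply Proposition~\ref{subgaussianspecprop} to get $\mathbb{E}\bigl[\|\sum_j\xi_jX_j\|_\infty\mid\xi\bigr]\lesssim|\xi|_{l_2}\sqrt m$, and finish with $\mathbb{E}|\xi|_{l_2}\le\sigma_\xi\sqrt n$.

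Two further mismatches with the statement. First, the envelope term in (\ref{adamczakineq}) is linear in $t$: requiring $\sigma_\xi\sqrt m\log(n)\,t/n\le C\sigma_\xi\sqrt{mt/n}$ forces $t\lesssim n/\log^2 n$, and the hypothesis $n\ge C_1m\log(m)\log(n)$---a relation between $n$ and $m$, not between $n$ and $t$---cannot remove this restriction; so your assembly proves the claimed $\sqrt t$-scaling only on a bounded range of $t$, not for every $t>0$. Second, your $1-\frac1m$ event, invoked ``to control the random spectral scale of the $X_j$'' inside an expectation bound, is not coherent in this framework: conditioning on an event involving all of $X_1,\dots,X_n$ destroys the i.i.d.\ structure needed to apply (\ref{adamczakineq}). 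The paper resolves both issues by conditioning at the outset: on the RIP event of Proposition~\ref{empprop} (probability at least $1-\frac1m$ under the sample-size assumption---this is the true origin of the $\frac1m$), the process $K_U=\frac1{\sqrt n}\sum_j\xi_j\langle X_j,U\rangle$ is sub-Gaussian in $\xi$ alone with increment metric $\lesssim\sigma_\xi\|U-V\|_2$ (RIP at rank $2$), and a Dudley entropy integral over rank-one matrices (entropy of order $m\log(C/\epsilon)$, Proposition~\ref{metricprop}) gives $\|\sup_UK_U\|_{\psi_2}\lesssim\sqrt m\,\sigma_\xi$, hence a genuinely sub-Gaussian tail $2e^{-ct}$ at threshold $\sigma_\xi\sqrt{mt}$ simultaneously for all $t>0$. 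Your scheme does work in the regime $1\lesssim t\lesssim n/\log^2 n$ that the paper actually uses downstream ($t\asymp\log m$), but as a proof of Lemma~\ref{bernsteinlem} as stated it is incomplete without the expectation bound and without an argument covering all $t$.
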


Now, we state the sub-Gaussian version of Theorem~\ref{mainthm}.
\begin{thm}
\label{mainthmsubgaussian}
 Suppose $X_j,j=1,\ldots,n$ are $i.i.d.$ sub-Gaussian measurements,$|\xi|_{\psi_2}\lesssim \sigma_{\xi}$ and any $\alpha>1	$, $c_0=1$ for Dantzig Selector, $c_0=3$ for matrix LASSO estimator. There exists some constants $C_1,C_2>0$ such that when $n\geq C_1m[\alpha^2(1+2c_0)^2r^2\vee \alpha(1+2c_0)r\log(m)\log(n)]$ and 
 $\lambda:=C_2\sigma_{\xi}\sqrt{mn\log(m)}$, with probability at least $1-\frac{4}{m}$,
 \begin{equation}
  \left|\left|\hat{A}_{\lambda}-A_0\right|\right|_{\infty}\leq c_dC_2\sigma_{\xi}\sqrt{\frac{m\log(m)}{n}}
 \end{equation}
and
\begin{equation}
  \left|\left|\hat{A}_{\lambda}-A_0\right|\right|_1\leq c'_dC_2\sigma_{\xi}\sqrt{\frac{m\log(m)}{n}}
 \end{equation}
 and for any integer $1\leq k\leq m$,
 \begin{equation}
  \left|\left|\hat{A}_{\lambda}-A_0\right|\right|_{F_k}\leq c_d(1+c_0)C_2\sigma_{\xi}(r\wedge k)\sqrt{\frac{m\log(m)}{n}}
 \end{equation}
 where $\hat{A}_{\lambda}$ can be $\hat{A}_{\lambda}^d$ or $\hat{A}_{\lambda}^L$ with only $c_0$ different and $c_d,c'_d$ are the same as Theorem~\ref{mainthm}.
\end{thm}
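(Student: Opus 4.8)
The plan is to obtain this statement as a direct probabilistic consequence of the deterministic Theorem~\ref{mainthm}. That theorem already produces exactly the spectral, nuclear, and Ky-Fan bounds we want, once we replace $\lambda$ by its chosen value, \emph{provided} its two hypotheses are in force: (i) the design operator $\mathcal{X}$ satisfies Assumption~\ref{corassump} on the restricted isometry constant $\delta_2$, and (ii) the tuning parameter obeys $\lambda \geq 2\|W\|_\infty$ with $W = \sum_{j=1}^n \xi_j X_j$, which by Lemma~\ref{conelem} is precisely what forces $\hat{A}_\lambda - A_0$ into the relevant cone $\mathcal{C}(r,c_0)$. Thus the whole argument reduces to showing that (i) and (ii) hold simultaneously with probability at least $1 - 4/m$ under sub-Gaussian sampling, and then reading off the rates.

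For hypothesis (i) I would invoke Lemma~\ref{subgaussianriplem} verbatim: under the sample-size condition $n \geq C_1 m[\alpha^2(1+2c_0)^2 r^2 \vee \alpha(1+2c_0) r \log(m)\log(n)]$, Assumption~\ref{corassump} holds on an event of probability at least $1 - 1/m$. This is where the substantive analysis sits --- it is the strong RIP bound on $\delta_2$ established via the Adamczak--Talagrand and Mendelson empirical-process inequalities of Section~\ref{defsect} --- but at this stage I may treat it as a black box.

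For hypothesis (ii) I would apply Lemma~\ref{bernsteinlem} with the truncation level $t = \log m$. The lemma yields $\tfrac{1}{n}\|W\|_\infty \leq C\sigma_\xi\sqrt{mt/n}$, hence $\|W\|_\infty \leq C\sigma_\xi\sqrt{mn\log m}$ after setting $t=\log m$, on an event of probability at least $1 - 2e^{-t} - 1/m = 1 - 3/m$. Taking the constant $C_2 \geq 2C$ in $\lambda := C_2\sigma_\xi\sqrt{mn\log m}$ then guarantees $\lambda \geq 2\|W\|_\infty$ there. Intersecting with the RIP event and union-bounding over the two failure probabilities ($1/m$ and $3/m$) gives the advertised $1 - 4/m$.

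On this good event both hypotheses of Theorem~\ref{mainthm} hold, so it remains only to substitute $\lambda = C_2\sigma_\xi\sqrt{mn\log m}$, using $\lambda/n = C_2\sigma_\xi\sqrt{m\log(m)/n}$, into the three bounds $\|\hat{A}_\lambda - A_0\|_\infty \leq c_d\lambda/n$, $\|\hat{A}_\lambda - A_0\|_1 \leq c'_d r\lambda/n$, and $\|\hat{A}_\lambda - A_0\|_{F_k} \leq c_d(1+c_0)(k\wedge r)\lambda/n$; the reasoning is identical for $\hat{A}_\lambda^d$ and $\hat{A}_\lambda^L$, with only $c_0$ (and hence the cone) differing. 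I expect no real obstacle beyond the probability bookkeeping: since the $-1/m$ inside Lemma~\ref{bernsteinlem} and the $-1/m$ of Lemma~\ref{subgaussianriplem} might stem from distinct auxiliary design events, the conservative choice is to add them, which still lands exactly at $4/m$. All the genuine difficulty has been front-loaded into the RIP lemma and the deterministic Theorem~\ref{mainthm}, leaving this proof essentially an assembly step.
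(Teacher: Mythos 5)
Your proposal is correct and is essentially the paper's own proof, only written out in more detail: the paper likewise combines Lemma~\ref{subgaussianriplem} (RIP event, failure probability $1/m$) with Lemma~\ref{bernsteinlem} (taking $t=\log m$, failure probability $3/m$) to guarantee $\lambda \geq 2\|W\|_{\infty}$, then feeds both hypotheses into Lemma~\ref{conelem} and Theorem~\ref{mainthm} and substitutes $\lambda/n = C_2\sigma_{\xi}\sqrt{m\log(m)/n}$. Your explicit probability bookkeeping (conservatively adding the two design-event failure probabilities to reach $4/m$) and the choice $C_2 \geq 2C$ are exactly the steps the paper leaves implicit in the phrase ``with $C_2$ well chosen.''
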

\begin{proof}
 According to Lemma~\ref{subgaussianriplem}, Assumption~\ref{corassump} is satisfied with probability at least $1-\frac{1}{m}$.
 With $C_2$ well chosen, we see that $\lambda\geq 2||W||_{\infty}$ holds with probability at least $1-\frac{3}{m}$ from Lemma~\ref{bernsteinlem}. Therefore, based on Lemma~\ref{conelem} and Theorem~\ref{mainthm},
 we can get our desired bound. 
\end{proof}
The Theorem~\ref{intromainthm2} is a direct result of Theorem~\ref{mainthmsubgaussian} by applying Corollary~\ref{maincor}.
\section{Minimax Lower Bound}
\label{minimaxsect}
In this section, we will prove Theorem~\ref{intromainthm3}. In an earlier paper by Ma and Wu\cite{mawu}, they provided similar minimax lower bounds for more general norms by using volume ratios. 
Our method constructs a well-seperated set of low rank matrices by applying the metric entropy bounds of Grassmann manifold. Suppose $2r\leq m$, consider any $1\leq q\leq \infty$, by Proposition~\ref{metricprop} and inequality (\ref{coverpackineq}),
we know that $M(\mathcal{G}_{m,r},\tau_q,\epsilon r^{1/q})\geq \left(\frac{c}{\epsilon}\right)^{r(m-r)}$. Therefore, there exists a set $\mathcal{B}:=\left\{P_{B_j}: B_j\in\mathcal{G}_{m,r}\right\}$ 
with $\text{card}(\mathcal{B})\geq 2^{r(m-r)}$ and $\tau_q(P_{B_j}-P_{B_k})\geq \frac{c}{2}r^{1/q}$ for any $j\neq k$. Based on $\mathcal{B}$, we construct the following set:
$\mathcal{A}:=\left\{\kappa P_{B_j}: P_{B_j}\in\mathcal{B}\right\}$ with $\kappa=c'\sigma_{\xi}\sqrt{\frac{m}{n}}$ with a small positive constant $c'>0$ which will be determined later.\\
For any $A_j,A_k\in\mathcal{A}, j\neq k$, we know that $\tau_q(A_j-A_k)\geq \frac{c\kappa}{2}r^{1/q}$. When $\xi\sim\mathcal{N}(0,\sigma_{\xi}^2)$, we can get for any $A,B\in\mathbb{R}^{m\times m}$,
\begin{equation}
 \begin{split}
 K(\mathbb{P}_A||\mathbb{P}_B)&=\mathbb{E}_{\mathbb{P}_A}\left[\text{log}\frac{\mathbb{P}_A}{\mathbb{P}_B}(X_1,Y_1,\ldots,X_n,Y_n) \right]\\
&=\mathbb{E}_{\mathbb{P}_A}\left[\sum\limits_{j=1}^n\left(-\frac{(Y_i-\left<A,X_i\right>)^2}{2\sigma_\xi^2}+\frac{(Y_i-\left<B,X_i\right>)^2}{2\sigma_\xi^2} \right) \right]\\
&=\mathbb{E}_{\mathbb{P}_A|\Pi}\mathbb{E}_{\Pi}\left[\sum\limits_{i=1}^n \frac{\left<A-B,X_i\right>(2Y_i-\left<A+B,X_i\right>)}{2\sigma_\xi^2}\right]\\
&=\frac{n}{2\sigma_\xi^2}||A-B||_{L_2(\Pi)}^2\lesssim \frac{n}{\sigma_{\xi}^2}||A-B||_2^2
\end{split}
\end{equation}
where $\mathbb{P}_A$ denotes the joint distribution of $(X_1,Y_1),\ldots,(X_n,Y_n)$ when $Y_j=\left<A,X_j\right>+\epsilon_j, j=1,\ldots,n$. The last inequality holds because $\Pi$ is a sub-Gaussian distribution.
From this inequality, we know that for any $A_j,A_k\in\mathcal{A},j\neq k$,
\begin{equation}
 K(\mathbb{P}_{A_j}||\mathbb{P}_{A_k})\lesssim \frac{n}{\sigma^2_{\xi}}||A_j-A_k||_2^2\leq \frac{2n\kappa^2r}{\sigma^2_{\xi}}=2c'mr\leq (mr-r^2)\log2\leq\log(\text{card}(\mathcal{A}))
\end{equation}
The third inequality holds whenever $c'$ is small enough. Then Theorem~\ref{intromainthm3} is an immediate conclusion by applying Tsybakov\cite[Theorem 2.5]{intro}. Indeed, by applying \cite[Theorem 2.5]{intro}, we have
\begin{equation}
  \underset{\hat{A}}{\inf}\underset{A\in\mathcal{A}_r}{\sup}\mathbb{P}_A\left(||\hat{A}-A||_q\geq c\sigma_{\xi}r^{1/q}\sqrt{\frac{m}{n}}\right)\geq c'
\end{equation}
for certain $c,c'>0$. The minimax lower bound for Ky-Fan-k norm is similar by choosing $r=k$ and $q=1$.

\section{Numerical Simulations}
\label{numericsect}
In this section, we show the results of numerical simulations. Since (\ref{lassomodel}) and (\ref{dantmodel}) are equivalent for certain $\lambda>0$, we only implement
numerical experiments for $\hat{A}_{\lambda}^L$. I should point out that even our analysis for optimal upper bound of $||\hat{A}_{\lambda}^L-A_0||_{\infty}$ requires 
that $n\gtrsim mr^2$, our numerical experiments will show that $n\gtrsim mr$ is indeed enough. To solve the optimization problem (\ref{lassomodel}), we will implement the Alternating
Direction Method of Multipliers(ADMM), Boyd et. al.\cite{boyd}, Lin et. al.\cite{linchen}. (\ref{lassomodel}) is equivalent to the following optimization problem:
\begin{equation}
 \hat{A}_{\lambda}:=\underset{A=B\in\mathbb{R}^{m\times m}}{\arg\min}\sum\limits_{j=1}^n\left(\left<A,X_j\right>-Y_j\right)^2+\lambda||B||_1
\end{equation}
ADMM forms the augmented Lagrangian:
\begin{equation}
 L_{\rho}(A,B,Z):=\sum\limits_{j=1}^n\left(\left<A,X_j\right>-Y_j\right)^2+\lambda||B||_1+\left<Z,A-B\right>+\frac{\rho}{2}||A-B||_2^2
\end{equation}
ADMM consists of the iterations as in Algorithm~\ref{admm_const}. Many papers in the literature showed that ADMM has good covergence properties. 
In our numcerical experiments, we choose
$n=5mr$ and $\lambda=7\sigma_{\xi}\sqrt{mn}$, where we fixed $\sigma_{\xi}=0.01$ and $\xi\sim\mathcal{N}(0,\sigma^2_{\xi})$. The low rank matrix $A_0$ is constructed as
a product of a $m*r$ Gaussian matrix and a $r*m$ Gaussian matrix. In our experiments, we implemented $m=40,50,60$ and $3\leq r\leq 25$, with $5$ trials for every $m$ and $r$. The measurements $X_1,\ldots,X_n$ are 
random Gausisan matrices or Rademacher matrices.
\begin{algorithm}
\label{admmalgorithm}
 \caption{ADMM Algorithm}\label{admm_const}
  \begin{algorithmic}[3]
  \State Set up value of $\text{max}\_\text{Iteration}$ and tolerance $\epsilon_{\text{tol}}>0$
   \State Initiate random $A^{(0)}\in\mathbb{R}^{m\times m}$, $B^{(0)}\in\mathbb{R}^{m\times m}$ and $Z^{(0)}=\bf{0}\in\mathbb{R}^{m\times m}$, k=0
   \While{k$<$max\_Iteration}
   \State $A^{(k+1)}=\underset{A\in\mathbb{R}^{m\times m}}{\arg\min}\quad \sum\limits_{j=1}^n\left(Y_j-\left<A,X_j\right>\right)^2+\left<A-B^{(k)},Z^{(k)}\right>+\frac{\rho}{2}||A-B^{(k)}||_2^2$
   \State $B^{(k+1)}=\underset{B\in\mathbb{R}^{m\times m}}{\arg\min}\quad \lambda||B||_1+\left<A^{(k+1)}-B,Z^{(k)}\right>+\frac{\rho}{2}||A^{(k+1)}-B||_2^2$\\
   \State $Z^{(k+1)}=Z^{(k)}+\rho(A^{(k+1)}-B^{(k+1)})$
   \If{$||A^{(k+1)}-B^{(k+1)}||_2^2\leq\epsilon_{\text{tol}}$}
      \State Reaching the tolerance. Return $A^{(k+1)}$ or $B^{(k+1)}$.
   \EndIf
   \State k=k+1
   \EndWhile
  \State Return $A^{(k+1)}$ or $B^{(k+1)}$.
  \end{algorithmic}
\end{algorithm}
The numerical results in Figure~\ref{fig:fig1} shows that under Gaussian measurements, we have $||\hat{A}_{\lambda}^{L}-A_0||_{\infty}\sim C\sigma_{\xi}\sqrt{\frac{m}{n}}$
where $C$ is betwwen $8$ and $10$. Since we choose $n=5mr$, $i.e.$, $\sigma_{\xi}\sqrt{\frac{m}{n}}\sim\sigma_{\xi}\sqrt{\frac{1}{r}}$, (\ref{fig:fig1_sub1}) in Figure~\ref{fig:fig1} shows that $||\hat{A}_{\lambda}^L-A_0||_{\infty}$ depends only
on the rank of $A_0$. 
\begin{figure}
\centering
\begin{subfigure}{.5\textwidth}
  \centering
  \includegraphics[height=3in]{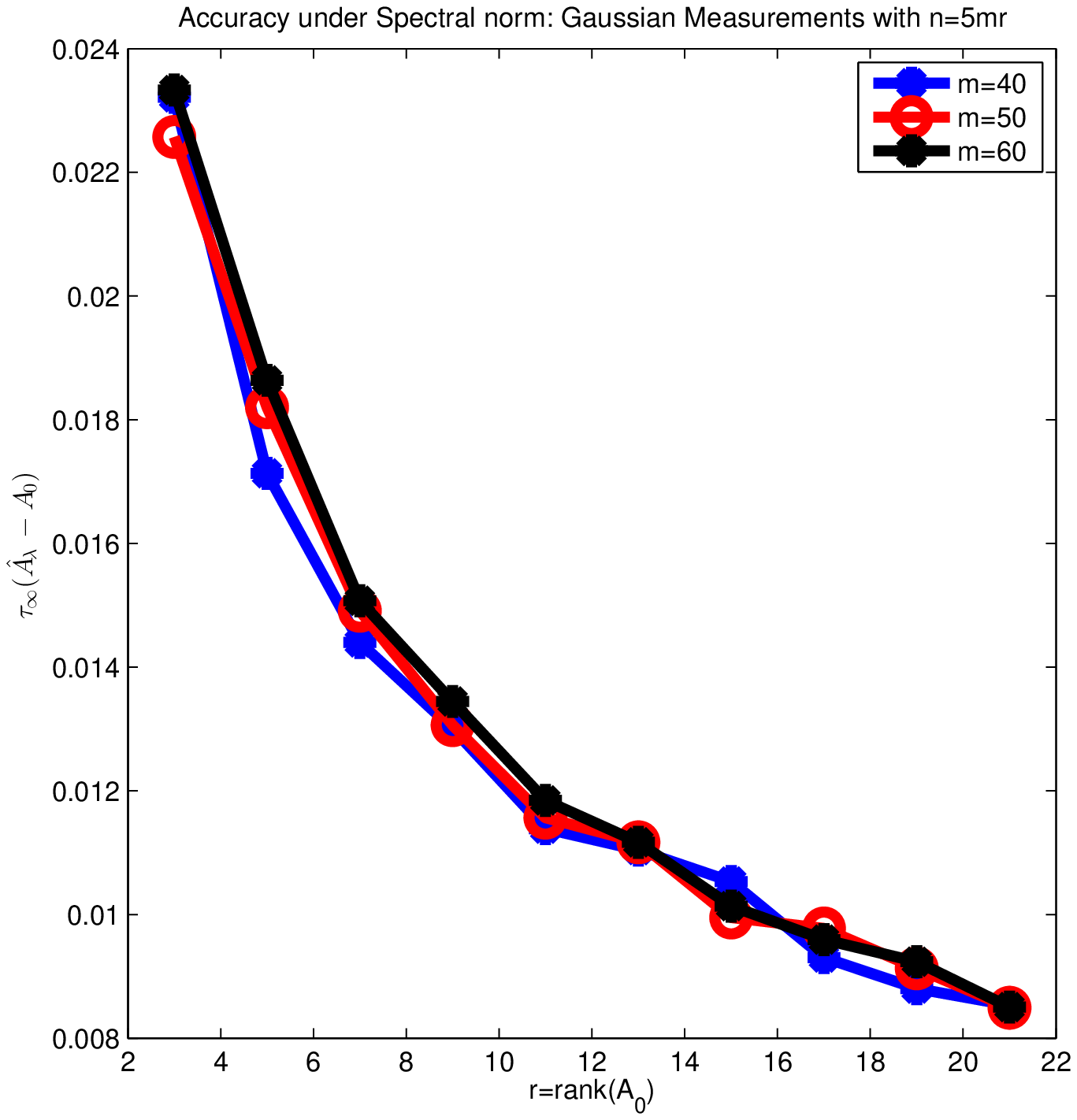}
  \caption{Accuracy under Spectral Norm}
  \label{fig:fig1_sub1}
\end{subfigure}%
\begin{subfigure}{.5\textwidth}
  \centering
  \includegraphics[height=3in]{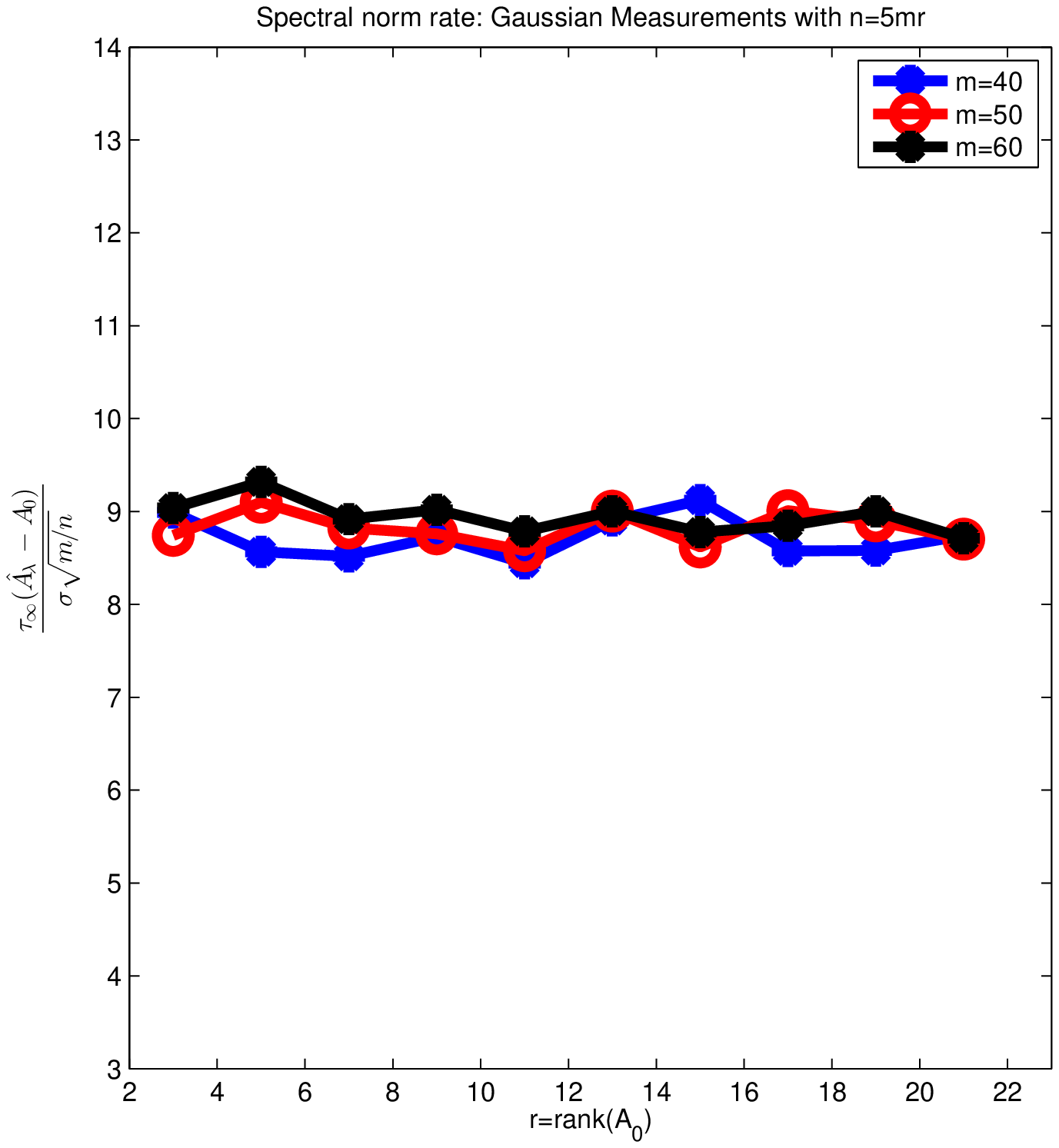}
  \caption{Accuracy ratio under Spectral Norm}
  \label{fig:fig1_sub2}
\end{subfigure}
\caption{Accuracy (ratio) by Spectral Norm under Gaussian measurements. The number of measurements is $n=5mr$ with $3\leq r\leq 21$ and $m=40,50,60$.
The x-axis stands for the $\text{rank}(A_0)$. In (\ref{fig:fig1_sub1}), the y-axis represents the average loss (5 trials) under spectral norm, $i.e., ||\hat{A}_{\lambda}^L-A_0||_{\infty}$.
We see that the average loss decreases with $\text{rank}(A_0)$ increases. In (\ref{fig:fig1_sub2}), the y-axis represents the ratio between the simulation accuracy and theoretical order of accuracy, $i.e., \frac{||\hat{A}_{\lambda}^L-A_0||_{\infty}}{\sigma_{\xi}\sqrt{m/n}}$.
It shows that the ratio belongs to $[8,10]$, remember that we choose $\lambda=7\sigma_{\xi}\sqrt{m/n}$.
}
\label{fig:fig1}
\end{figure}

In Figure~\ref{fig:fig2}, we show the behavior of accuracy by Spectral norm under Rademacher measurements. Similar to the results of Figure~\ref{fig:fig1},
estimation accuracy decreases as $\text{rank}(A_0)$ increases. 
\begin{figure}
\centering
\begin{subfigure}{.5\textwidth}
  \centering
  \includegraphics[height=3in]{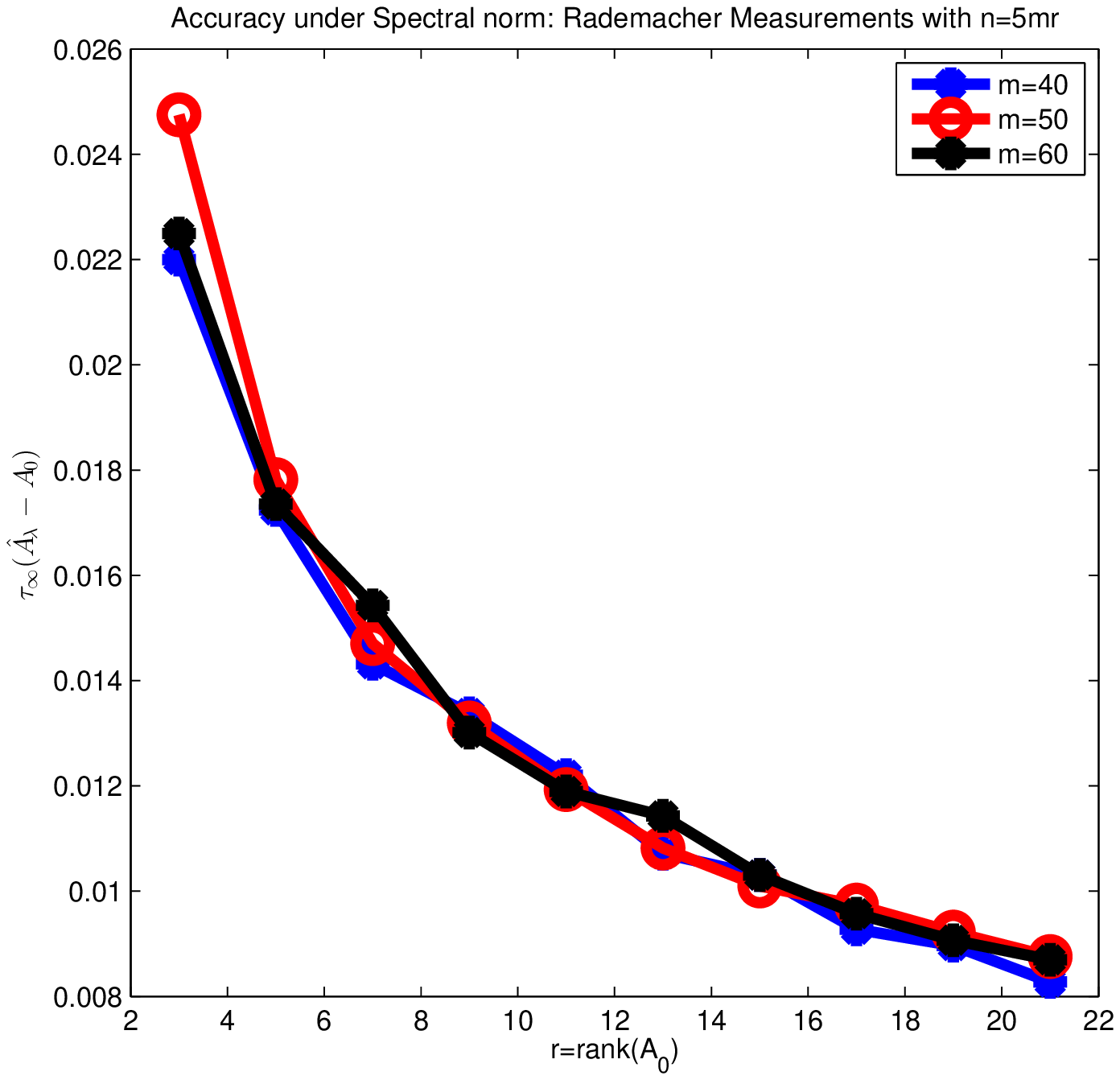}
  \caption{Accuracy under Spectral norm}
  \label{fig:fig2_sub1}
\end{subfigure}%
\begin{subfigure}{.5\textwidth}
  \centering
  \includegraphics[height=3in]{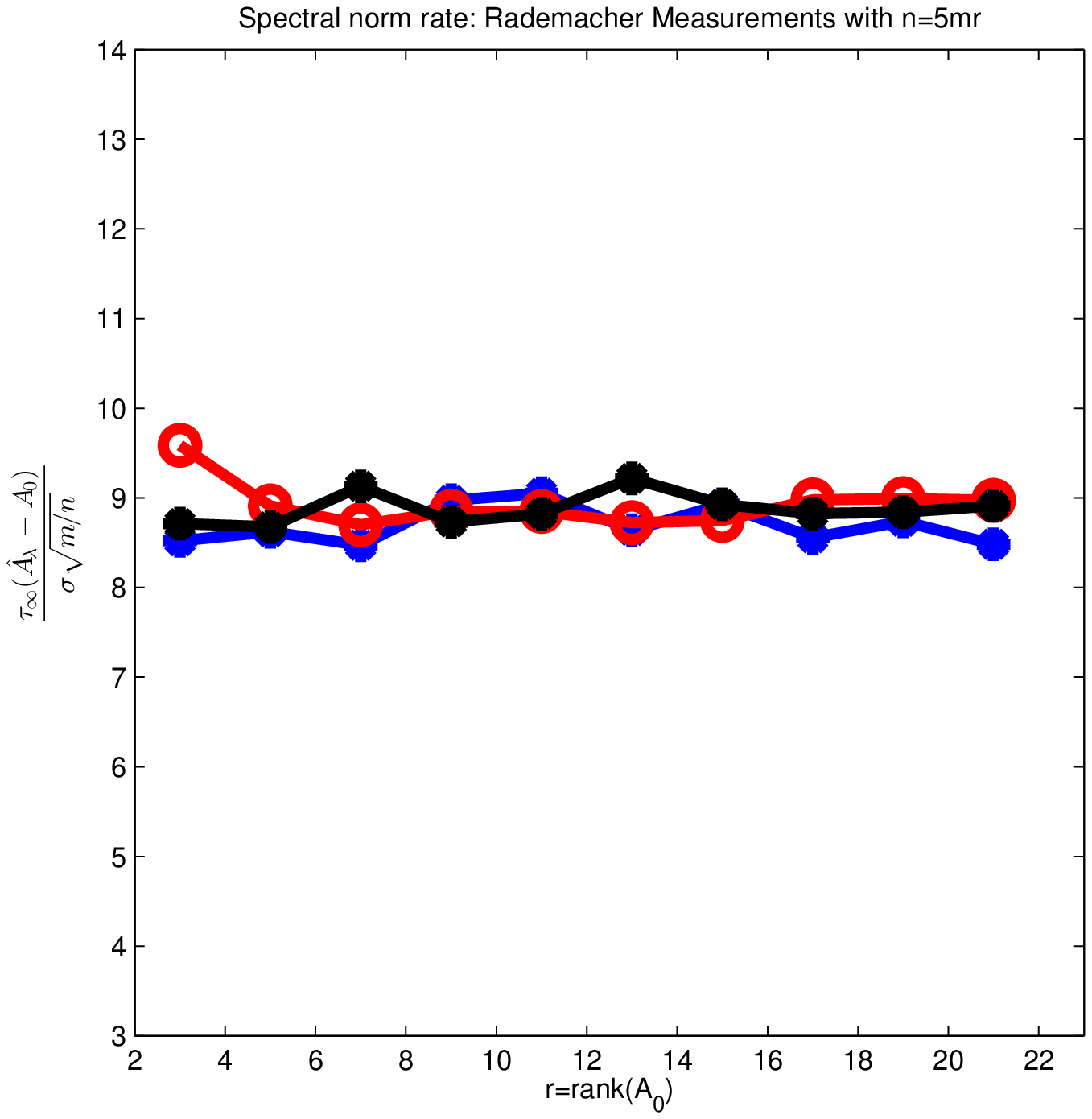}
  \caption{Accuracy ratio under Spectral norm}
  \label{fig:fig2_sub2}
\end{subfigure}
\caption{Accuracy by Spectral Norm (ratio) under Rademacher measurements. Similar to the results in Figure~\ref{fig:fig1}), the ratio $\frac{||\hat{A}_{\lambda}^L-A_0||_{\infty}}{\sigma_{\xi}\sqrt{m/n}}\in[8,10]$.
}
\label{fig:fig2}
\end{figure}
\newpage
\appendix

\section{Proof of Lemma~\ref{bernsteinlem}}\label{appA}
A well-known fact is as follows, Rudelson and Vershynin\cite{rudelsonvershynin}.
\begin{prop}
\label{subgaussianspecprop}
 Let $X\in\mathbb{R}^{m\times m}$ be a sub-Gaussian matrix. There exists a constant $B>0$ such that $||||X||_{\infty}||_{\psi_2}\leq B\sqrt{m}$.
\end{prop}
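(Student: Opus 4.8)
The plan is to realize the spectral norm as a supremum of linear functionals of $X$ over rank-one matrices and then discretize the sphere by an $\epsilon$-net. Writing $\|X\|_\infty = \sup_{u,v\in S^{m-1}} u^{T}Xv$, where $S^{m-1}$ denotes the Euclidean unit sphere in $\mathbb{R}^m$ and $u^{T}Xv = \langle uv^{T}, X\rangle$, I would fix $\epsilon = 1/4$ and choose $\epsilon$-nets $\mathcal{N}_1,\mathcal{N}_2$ of $S^{m-1}$ with $\operatorname{card}(\mathcal{N}_i)\le (1+2/\epsilon)^m = 9^m$. A standard net comparison gives the \emph{deterministic} pointwise bound $\|X\|_\infty \le (1-\epsilon)^{-2}\max_{u\in\mathcal{N}_1,\,v\in\mathcal{N}_2} u^{T}Xv \le 2\max_{u\in\mathcal{N}_1,\,v\in\mathcal{N}_2} u^{T}Xv$. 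By monotonicity of the Orlicz norm this yields $\bigl\|\,\|X\|_\infty\,\bigr\|_{\psi_2} \le 2\,\bigl\|\max_{u,v} u^{T}Xv\bigr\|_{\psi_2}$, reducing the problem to controlling a maximum over finitely many sub-Gaussian random variables.

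For each fixed pair $(u,v)$ with $u,v\in S^{m-1}$, the variable $u^{T}Xv = \langle uv^{T}, X\rangle$ is sub-Gaussian by the defining property of a sub-Gaussian matrix, and $\|u^{T}Xv\|_{\psi_2} \le b_1\|uv^{T}\|_{L_2(\Pi)} \le b_1 b_2\|uv^{T}\|_2 = b_1 b_2$, using $\|uv^{T}\|_2 = \|u\|_2\|v\|_2 = 1$. Hence the collection $\{u^{T}Xv : u\in\mathcal{N}_1,\, v\in\mathcal{N}_2\}$ consists of $N := \operatorname{card}(\mathcal{N}_1)\operatorname{card}(\mathcal{N}_2) \le 81^m$ random variables, each with $\psi_2$-norm at most $b_1 b_2$.

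The final step invokes the Orlicz maximal inequality (van der Vaart and Wellner): for any finite family $Z_1,\dots,Z_N$ one has $\bigl\|\max_i Z_i\bigr\|_{\psi_2} \le K\,\psi_2^{-1}(N)\,\max_i\|Z_i\|_{\psi_2}$, where $\psi_2^{-1}(N)\asymp \sqrt{\log(N+1)}$. Since $\log(N+1)\lesssim m\log 81 \lesssim m$, this gives $\bigl\|\max_{u,v} u^{T}Xv\bigr\|_{\psi_2}\le K' b_1 b_2\sqrt{m}$, and combining with the net factor produces $\bigl\|\,\|X\|_\infty\,\bigr\|_{\psi_2}\le B\sqrt{m}$ with $B$ absorbing $b_1, b_2$ and the universal constants.

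The one point requiring care is that the reduction must be performed at the level of the Orlicz norm rather than through a single tail estimate; this is precisely why I keep the net comparison as a deterministic inequality and apply monotonicity of $\|\cdot\|_{\psi_2}$ \emph{before} invoking the maximal inequality, so that the cardinality $e^{O(m)}$ of the net is what generates the $\sqrt{m}$ scaling. Beyond this structural choice, I expect no substantial obstacle apart from bookkeeping of the constants $b_1,b_2$ arising from the sub-Gaussian design and the universal constant in the maximal inequality.
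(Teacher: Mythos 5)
Your proof is correct, but there is nothing in the paper to compare it against: the paper states this proposition as a ``well-known fact'' with a citation to Rudelson and Vershynin and gives no proof at all, so your argument fills a gap rather than paralleling an existing one. What you wrote is the standard net argument, and each step checks out against the paper's definitions: the deterministic reduction $\|X\|_{\infty}\le (1-\epsilon)^{-2}\max_{u\in\mathcal{N}_1,v\in\mathcal{N}_2}u^{T}Xv\le 2\max_{u,v}u^{T}Xv$ is valid for $\epsilon=1/4$ (first pass from the sphere to $\mathcal{N}_2$, then from $\|Xv\|_{2}$ to $\mathcal{N}_1$, each costing a factor $(1-\epsilon)^{-1}$); the rank-one evaluation $\|\langle uv^{T},X\rangle\|_{\psi_2}\le b_1\|uv^{T}\|_{L_2(\Pi)}\le b_1b_2\|uv^{T}\|_2=b_1b_2$ is exactly the paper's definition of a sub-Gaussian matrix from Section~\ref{introsect}; and the maximal inequality $\|\max_{i\le N}Z_i\|_{\psi_2}\le K\psi_2^{-1}(N)\max_i\|Z_i\|_{\psi_2}$ (van der Vaart and Wellner, Lemma 2.2.2, which requires no independence among the $Z_i$) with $\psi_2^{-1}(N)=\sqrt{\log(N+1)}\lesssim\sqrt{m}$ for $N\le 81^{m}$ gives the claimed bound. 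Your structural remark is also the right one for this paper: the proposition is invoked in Appendices~\ref{appA} and~\ref{appB} precisely in Orlicz-norm form (e.g., to bound $\mathbb{E}\|G\|_{\infty}$ via $\|\|G\|_{\infty}\|_{\psi_2}$ and to bound $\|\|X\|_{\infty}\|_{\psi_2}^2$), so performing the net reduction as a pointwise inequality and invoking monotonicity of $\|\cdot\|_{\psi_2}$ before the maximal inequality delivers the statement in exactly the form needed, whereas a union-bound tail estimate would require an additional (routine but nontrivial) conversion back into a $\psi_2$-norm bound.
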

\begin{proof}[{\it Proof of Lemma~\ref{bernsteinlem}}]
 We consider sub-Gaussian noise, $i.e.$, $|\xi|_{\psi_2}\lesssim \sigma_{\xi}$.
 We know that, 
 $$
 \left|\left|\frac{1}{n}\sum\limits_{j=1}^n\xi_j X_j\right|\right|_{\infty}=\underset{||U||_2\leq1,\text{rank}(U)\leq 1}{\sup}\frac{1}{n}\sum\limits_{j=1}^n\xi_j\left<X_j,U\right>.
 $$
 Accroding to Proposition~\ref{empprop}, there exists an event $\mathcal{E}$ with $\mathbb{P}(\mathcal{E})\geq 1-\frac{1}{m}$ such that $X_1,\ldots,X_n$ satisfy the RIP with $\delta_2\leq\frac{1}{2}$, as long as $n\geq Cm\log(m)\log(n)$ for some $C>0$.
 Now we fix $X_1,\ldots,X_n$ on the event $\mathcal{E}$ and define $K_U:=\frac{1}{\sqrt{n}}\sum\limits_{j=1}^n\xi_j\left<X_j,U\right>$ for $\forall U\in\mathcal{U}_1:=\left\{U\in\mathbb{R}^{m\times m}: ||U||_2\leq 1,\text{rank}(U)\leq 1\right\}$. This is a sub-Gaussian process indexed by $U\in\mathcal{U}_1$, since $\xi_j,j=1,\ldots,n$ are $i.i.d.$ sub-Gaussians. 
 For any $U,V\in\mathcal{U}_1$, we have $K_U-K_V=\frac{1}{\sqrt{n}}\sum\limits_{j=1}^n\xi_j\left<X_j,U-V\right>$. We konw that $\xi_j\left<X_j,U-V\right>,j=1,\ldots,n$ are $i.i.d.$ centered sub-Gaussian 
 random variables and $|\xi_j\left<X_j,U-V\right>|_{\psi_2}\lesssim \sigma_{\xi}|\left<X_j,U-V\right>|$ for $j=1,\ldots,n$. We apply Lemma~\ref{subgaussiannormlem} and get
 $||K_U-K_V||_{\psi_2}^2\lesssim \sigma_{\xi}^2\frac{1}{n}\sum\limits_{j=1}^n\left<X_j,U-V\right>^2\leq \sigma_{\xi}^2(1+\delta_2)||U-V||_2^2\leq 2\sigma_{\xi}^2||U-V||_2^2$. 
 Therefore, for every $U,V\in\mathcal{U}_1$, we have
 \begin{equation}
  ||K_U-K_V||_{\psi_2}\lesssim \sigma_{\xi}||U-V||_2
 \end{equation}
By defining a distance $d(U,V):=\sigma_{\xi}||U-V||_2$ for any $U,V\in\mathcal{U}_1$, we apply van der Vaart and Wellner\cite[Corollary 2.2.6]{vaartwellner}, we get that
\begin{equation}
 ||\underset{U\in\mathcal{U}_1}{\sup}K_U||_{\psi_2}\lesssim \int_0^{\text{diam}(\mathcal{U}_1)}\sqrt{\log M(\mathcal{U}_1,d,\epsilon)}d\epsilon
\end{equation}
where $\text{diam}(\mathcal{U}_1)=\underset{U,V\in\mathcal{U}_1}{\sup}d(U,V)\leq \sqrt{2}\sigma_{\xi}$. It is easy to see that $M(\mathcal{U}_1,d,\epsilon)\leq M(\mathcal{G}_{m,1},d,\epsilon)$.
According to Lemma~\ref{metricprop}, we know that $\log M(\mathcal{U}_1,d,\epsilon)\leq \log M(\mathcal{G}_{m,1},\tau_2,\frac{\epsilon}{\sigma_{\xi}})\leq m\log(\frac{C\sigma_{\xi}}{\epsilon})$.
Put these bounds into the integral, we get that
\begin{equation}
 ||\underset{U\in\mathcal{U}_1}{\sup}K_U||_{\psi_2}\lesssim \sqrt{m}\int_0^{2\sigma_{\xi}}\sqrt{\log(C\sigma_{\xi}/\epsilon)}d\epsilon\leq \sqrt{m}\sigma_{\xi}\int_{1/2}^{\infty}\frac{\sqrt{\log(Cu)}}{u^2}du\lesssim \sqrt{m}\sigma_{\xi}
\end{equation}
Therefore, we know that $||\underset{U\in\mathcal{U}_1}{\sup}K_U||_{\psi_2}\lesssim \sqrt{m}\sigma_{\xi}$. Therefore, for some $C_1>0$ and for every $\rho,t>0$,
\begin{equation}
\begin{split}
 \mathbb{P}\left(\underset{U\in\mathcal{U}_1}{\sup}K_U\geq C_1t\sqrt{m}\sigma_{\xi}\right)=&\mathbb{P}\left(\exp\{(\rho\underset{U\in\mathcal{U}_1}{\sup}K_U)^2\}\geq \exp\{C_1^2\rho^2t^2m\sigma_{\xi}^2\}\right)\\
 \leq&\exp\{-C_1^2\rho^2t^2m\sigma_{\xi}^2\}\mathbb{E}\exp\{(\underset{U\in\mathcal{U}_1}{\sup}K_U)^2\rho^2\}
 \end{split}
\end{equation}
We can choose $\rho\lesssim \frac{1}{\sqrt{m}\sigma_{\xi}}$ such that $\mathbb{E}\exp\{(\underset{U\in\mathcal{U}_1}{\sup}K_U)^2\rho^2\}\leq 2$ and we get that
$$
\mathbb{P}\left(\underset{U\in\mathcal{U}_1}{\sup}K_U\geq C_1t\sqrt{m}\sigma_{\xi}\right)\leq 2\exp\{-C_2t^2\},
$$
for some $C_2>0$.
By the definition of $K_U$, we get our desired bound. Since our analysis is conditioned on the event $\mathcal{E}$, there is an additional $\frac{1}{m}$.
 
\end{proof}

\section{An Empirical Process Bound}\label{appB}
\begin{prop}
\label{empprop}
 Suppose $X_1,\ldots,X_n$ are $i.i.d.$ sub-Gaussian matrices with distribution $\Pi$. Then, for an integer $1\leq r\leq m$ and all matrix $A$ with $||A||_2^2\in\left[\frac{1}{2},2\right]$, we have that for every $t>0$, with probability at least $1-e^{-t}$,
\begin{equation}
 \left|\frac{1}{n}\sum\limits_{j=1}^n\left<A,X_j\right>^2-\mathbb{E}\left<A,X\right>^2\right|\leq C||A||_2^2\left(\sqrt{\frac{t}{n}}\vee \frac{mrt\log(n)}{n}\vee \frac{rm}{n}\vee\sqrt{\frac{rm}{n}}\right)
\end{equation}
where $C>0$ is a universal constant related to $\Pi$
\end{prop}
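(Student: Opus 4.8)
The plan is to read the left-hand side as the deviation of an empirical process indexed by the low-rank matrix $A$, and to combine Mendelson's expectation bound (\ref{mendelsonineq}) with Adamczak's version of Talagrand's inequality (\ref{adamczakineq}). By $2$-homogeneity of the process in $A$, it suffices to treat $\|A\|_2=1$ and then rescale, which only changes constants and reproduces the $\|A\|_2^2$ prefactor. Accordingly I introduce the symmetric, centered class
\[
 \mathcal{F}:=\left\{f_A:\ f_A(X)=\left<A,X\right>,\ \text{rank}(A)\le r,\ \|A\|_2=1\right\},
\]
which is symmetric under $A\mapsto -A$ and satisfies $\mathbb{E}f_A(X)=0$, so both empirical-process tools apply. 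The four terms in the claimed bound will arise, respectively, from the Mendelson expectation term, the weak variance $\sigma_{\mathcal{F}}$, and the envelope.

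First I would bound the expectation via (\ref{mendelsonineq}). The sub-Gaussian assumption gives $\|f_A\|_{\psi_2}=\|\left<A,X\right>\|_{\psi_2}\lesssim\|A\|_{L_2(\Pi)}\lesssim\|A\|_2$ and, for the induced metric, $\|f_A-f_B\|_{\psi_2}\lesssim\|A-B\|_2$, so $\gamma_2(\mathcal{F},\psi_2)\lesssim\gamma_2(\mathcal{T},\|\cdot\|_2)$ where $\mathcal{T}$ is the set of rank-$\le r$ matrices in the unit Frobenius ball. By Dudley's entropy bound (the majorizing-measure upper estimate) $\gamma_2(\mathcal{T},\|\cdot\|_2)\lesssim\int_0^{\mathrm{diam}}\sqrt{\log N(\mathcal{T},\|\cdot\|_2,\epsilon)}\,d\epsilon$, and the metric entropy of $\mathcal{T}$ is $\log N\lesssim rm\log(C/\epsilon)$, obtained by covering the factor subspaces through the Grassmann bound of Proposition~\ref{metricprop} together with an $\epsilon$-net of the singular-value block. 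The integral then yields $\gamma_2(\mathcal{F},\psi_2)\lesssim\sqrt{rm}$. Since $\sup_{f\in\mathcal{F}}\|f\|_{\psi_1}\le\sup_{f\in\mathcal{F}}\|f\|_{\psi_2}\lesssim 1$, (\ref{mendelsonineq}) gives
\[
 \mathbb{E}\underset{f\in\mathcal{F}}{\sup}\left|\frac1n\sum_{j=1}^n f^2(X_j)-\mathbb{E}f^2(X)\right|\lesssim\sqrt{\frac{rm}{n}}\bigvee\frac{rm}{n},
\]
which supplies the last two terms of the Proposition.

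Next I would feed this into (\ref{adamczakineq}). The weak variance is $\sigma_{\mathcal{F}}^2=\sup_f\mathrm{Var}(f^2)\le\sup_f\mathbb{E}f^4\lesssim\|A\|_{L_2(\Pi)}^4\lesssim 1$, using the fourth-moment bound $\mathbb{E}^{1/4}\left<A,X\right>^4\lesssim\|A\|_2$ recorded in the preliminaries; hence $\sigma_{\mathcal{F}}\sqrt{t/n}\lesssim\sqrt{t/n}$, the first term. For the envelope, $F(X)=\sup_{f\in\mathcal{F}}f^2(X)=\|X_{\max(r)}\|_2^2\le r\|X\|_\infty^2$; by Proposition~\ref{subgaussianspecprop}, $\|\,\|X\|_\infty\,\|_{\psi_2}\lesssim\sqrt{m}$, so $\|\,\|X\|_\infty^2\,\|_{\psi_1}\lesssim m$ and $\|F\|_{\psi_1}\lesssim rm$. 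Taking the maximum over the $n$ samples inflates the $\psi_1$-norm by a logarithmic factor, $\|\max_{1\le j\le n}F(X_j)\|_{\psi_1}\lesssim rm\log(n)$, and this contributes $\|\max_j F(X_j)\|_{\psi_1}\,t/n\lesssim rm\,t\log(n)/n$, the remaining term. Collecting all four estimates in (\ref{adamczakineq}) and rescaling by $\|A\|_2^2$ gives the stated inequality.

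The main obstacle is the $\gamma_2$ computation for the nonconvex low-rank index set: one must cover both the Grassmann manifold of the factor subspaces and the singular values, and the resulting $\sqrt{rm}$ controls the $\sqrt{rm/n}$ and $rm/n$ terms. The secondary delicate point is the $\psi_1$-control of the maximal envelope $\max_{j\le n}F(X_j)$, since it is exactly the $\log(n)$ there (coupled with the $\sqrt{rm}$ from $\gamma_2$) that forces the $n\gtrsim m[r^2\vee r\log(m)\log(n)]$ sample-size requirement used in Lemma~\ref{subgaussianriplem} and discussed after Assumption~\ref{corassump}.
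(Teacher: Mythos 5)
Your proposal is correct, and its skeleton is the same as the paper's: normalize over the low-rank Frobenius sphere/annulus, bound the expected supremum via Mendelson's inequality (\ref{mendelsonineq}), then upgrade to a deviation bound via Adamczak's inequality (\ref{adamczakineq}), with the identical variance estimate $\sup_A \mathbb{E}\left<A,X\right>^4 \lesssim \|A\|_2^4$ producing the $\sqrt{t/n}$ term and the identical envelope estimate $\sup_A\left<A,X\right>^2 \lesssim r\|X\|_\infty^2$, combined with Proposition~\ref{subgaussianspecprop} and the Orlicz-maxima lemma, producing the $mrt\log(n)/n$ term. The one genuine divergence is the key quantitative step, the bound $\gamma_2(\mathcal{F},\psi_2)\lesssim\sqrt{rm}$. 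The paper gets it by invoking the Gaussian-process side of Talagrand's theorem: $\gamma_2(\mathcal{F}_r(T);c\|\cdot\|_2)\lesssim\mathbb{E}\sup_{A\in\Delta_r(T)}\left|\left<A,G\right>\right|\leq\sqrt{rT}\,\mathbb{E}\|G\|_\infty\lesssim\sqrt{rTm}$, i.e.\ trace duality against a standard Gaussian matrix plus the spectral-norm bound of Proposition~\ref{subgaussianspecprop}; this completely avoids constructing nets for the nonconvex low-rank set. You instead run Dudley's entropy integral with the covering-number bound $\log N(\mathcal{T},\|\cdot\|_2,\epsilon)\lesssim rm\log(C/\epsilon)$ for rank-$\le r$ matrices in the unit Frobenius ball, assembled from Grassmann coverings (Proposition~\ref{metricprop}) and a net for the singular-value block. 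Both yield $\sqrt{rm}$; your route is more elementary in that it needs only the Dudley upper bound rather than the majorizing-measure comparison, but it owes a careful (if standard, cf.\ Candes and Plan) composition argument showing that perturbing the two subspaces and the singular values by $\epsilon$ perturbs $U\Sigma V^T$ by $O(\epsilon)$ in Frobenius norm — a step you assert rather than carry out, whereas the paper's Gaussian-comparison step is a one-line computation given Talagrand's theorem, which the paper has already stated in Section~\ref{defsect}.
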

\begin{proof}
We consider the following empirical process:
\begin{equation}
 \alpha_n(r,T)=\underset{A\in \Delta_r(T)}{\sup}\left|\frac{1}{n}\sum\limits_{j=1}^n\left<A,X_j\right>^2-\mathbb{E}\left<A,X_j\right>^2\right|
\end{equation}
where the set 
\begin{equation}
 \Delta_{r}(T):=\Big\{A\in\mathbb{R}^{m\times m}: \frac{T}{4}\leq ||A||_2^2\leq T, \text{rank}(A)\leq r\Big\}
 \end{equation}
We want to obtain an upper bound of $\alpha_n(r,T)$.

According to the Adamczak's version of Talagrand inequality (\ref{adamczakineq}), there exists some constant $K>0$ such that for any $t>0$, with probability at least $1-e^{-t}$,
\begin{equation}
 \alpha_n(r,T)\leq K\left[\mathbb{E}\alpha_n(r,T)+T\sqrt{\frac{t}{n}}+\frac{Tmrt\log(n)}{n}\right]
\end{equation}
Here we used the following bounds:
\begin{equation}
\begin{split}
 \underset{A\in\Delta_r(T)}{\sup}&\mathbb{E}\left<A,X\right>^4\lesssim \underset{A\in\Delta_{r}(T)}{\sup}||A||_2^4\leq T^2
 \end{split}
\end{equation}
where the first inequality comes from the fact $\mathbb{E}^{1/p}\left<A,X\right>^p\lesssim ||A||_2$ for $p\geq 1$ as introduced in Section~\ref{introsect}.
Meanwhile,
\begin{equation}
\begin{split}
 \Bigg|\Bigg|\underset{1\leq i\leq n}{\max}&\underset{A\in\Delta_{r}(T)}{\sup}\left<A,X\right>^2\Bigg|\Bigg|_{\psi_1}\lesssim \Bigg|\Bigg|\underset{A\in\Delta_{r}(T)}{\sup}\left<A,X\right>^2\Bigg|\Bigg|_{\psi_1}\log n\\
 \lesssim &rT||||X||_{\infty}^{2}||_{\psi_1}\log(n)\lesssim rT||||X||_{\infty}||_{\psi_2}^2\log(n)\lesssim rTm\log(n)
\end{split}
 \end{equation}
where we used well-known inequalities for maxima of random variables in Orlicz spaces, van der Varrat and Wellner\cite[Chapter 2]{vaartwellner}, and $||A||_1\leq \sqrt{r}||A||_2$ for any $A\in\Delta_r(T)$.\\
Now we try to get an upper bound for $\mathbb{E}\alpha_{n}(r,T)$. We apply Mendelson's inequality (\ref{mendelsonineq}) for the class of functions $\mathcal{F}_r(T):=\left\{f_A(\cdot):=\left<A,\cdot\right>: A\in\Delta_r(T)\right\}$.

According to the property of sub-Gaussiam matrices introduced in Section~\ref{introsect}, we know that $|f_A(X)|_{\psi_1}\lesssim ||A||_2\leq \sqrt{T}$. We also konw the following bound for Talagrand's generic chaining complexities in Orclize space:
\begin{equation}
 \gamma_2(\mathcal{F}_r(T);\psi_2)\leq \gamma_2(\mathcal{F}_r(T);c||\cdot||_2).
\end{equation}
where $c>0$ is a constant, since $||A||_{\psi_2}\lesssim ||A||_{L_2(\Pi)}\lesssim ||A||_2$ as introduced in Section~\ref{introsect}. From Talagrand's generic chaining bound, we get that
\begin{equation}
 \gamma_2(\mathcal{F}_r(T);c||\cdot||_2)\lesssim \mathbb{E}\underset{A\in\Delta_r(T)}{\sup}\left|\left<A,G\right>\right|\leq \sqrt{r}T\mathbb{E}||G||_{\infty}\lesssim \sqrt{rTm}
\end{equation}
where $G\in\mathbb{R}^{m_1\times m_2}$ denotes standard Gaussian matrix. The last inequality comes from the fact that 
\begin{equation}
 \mathbb{E}||G||_{\infty}\leq \mathbb{E}^{1/2}||G||_{\infty}^2\leq ||||G||_{\infty}||_{\psi_2}\sqrt{\log2}\lesssim \sqrt{m}
\end{equation}
where the first inequality comes from Jensen inequality and the last inequality comes from Proposition~\ref{subgaussianspecprop}. For the second inequality, by the definition of $||||G||_{\infty}||_{\psi_2}$,
\begin{equation}
 \mathbb{E}\exp\left\{||G||_{\infty}^2/||||G||_{\infty}||_{\psi_2}^2\right\}-1\leq 1
\end{equation}
By Jensen inequality, we get $\mathbb{E}||G||_{\infty}^2/||||G||_{\infty}||_{\psi_2}^2\leq \log2$.
Put these bound into (\ref{mendelsonineq}), we get
\begin{equation}
 \mathbb{E}\underset{f\in\mathcal{F}_r(T)}{\sup}\left|\frac{1}{n}\sum\limits_{j=1}^nf(X_j)^2-\mathbb{E}f^2(X)\right|\leq CT\left(\sqrt{\frac{rm}{n}}\vee \frac{rm}{n}\right)
\end{equation}
Therefore, we get that with probability at least $1-e^{-t}$ such that for some constant $C>0$,
\begin{equation}
 \underset{A\in\Delta_r(T)}{\sup}\left|\frac{1}{n}\sum\limits_{j=1}^n\left<A,X_j\right>^2-\mathbb{E}\left<A,X\right>^2\right|\leq C||A||_2^2\left(\sqrt{\frac{t}{n}}\vee \frac{mrt\log(n)}{n}\vee \frac{rm}{n}\vee\sqrt{\frac{rm}{n}}\right)
\end{equation}
\end{proof}
%\section{Results}

%\begin{table}[H]
%\caption{Example table}
%\centering
%\begin{tabular}{llr}
%\toprule
%\multicolumn{2}{c}{Name} \\
%\cmidrule(r){1-2}
%First name & Last Name & Grade \\
%\midrule
%John & Doe & $7.5$ \\
%Richard & Miles & $2$ \\
%\bottomrule
%\end{tabular}
%\end{table}

%\lipsum[5] % Dummy text

%\begin{equation}
%\label{eq:emc}
%e = mc^2
%\end{equation}

%\lipsum[6] % Dummy text

%------------------------------------------------

%\section{Discussion}

%\subsection{Subsection One}

%\lipsum[7] % Dummy text

%\subsection{Subsection Two}

%\lipsum[8] % Dummy text

%----------------------------------------------------------------------------------------
%	REFERENCE LIST
%----------------------------------------------------------------------------------------
\bibliographystyle{abbrv}
\bibliography{refer}
%\begin{thebibliography}{99} % Bibliography - this is intentionally simple in this template

%\bibitem[Figueredo and Wolf, 2009]{Figueredo:2009dg}
%Figueredo, A.~J. and Wolf, P. S.~A. (2009).
%\newblock Assortative pairing and life history strategy - a cross-cultural
%  study.
%\newblock {\em Human Nature}, 20:317--330.
 
%\end{thebibliography}

%----------------------------------------------------------------------------------------

%\end{multicols}

\end{document}